\documentclass[11pt]{article}
\usepackage{geometry}
\geometry{
	left=3.5cm,
	right=3.5cm,
	top=3cm,
	bottom=3cm
}
\usepackage{lipsum,booktabs}
\usepackage{amsmath,mathrsfs,amssymb,amsfonts,bm,bbm,enumitem}
\usepackage{rotating,verbatim,subfig}
\usepackage{pdflscape}
\usepackage{tcolorbox,xcolor}
\usepackage{hyperref,url,dsfont,nicefrac,natbib}
\allowdisplaybreaks
\usepackage{appendix}
\usepackage{multirow,makecell,tabularx}

\usepackage{algorithmic,algorithm}

\usepackage{standalone}     
\usepackage{preview}

\renewcommand{\tilde}{\widetilde}
\renewcommand{\hat}{\widehat}


\def \E {\mathbb{E}}

\def \H {\mathcal{H}}

\def \O {\mathcal{O}}

\def \R {\mathbb{R}}
\def \S {\mathcal{S}}
\def \T {\top}

\def \X {\mathcal{X}}

\def \v {\mathbf{v}}

\def \x {\mathbf{x}}
\def \y {\mathbf{y}}
\def \z {\mathbf{z}}

\def \Ecal {\mathcal{E}}

\def \Ot {\tilde{\O}}

\def \thetah {\hat{\theta}}

\def \base {\mathtt{base}\mbox{-}\mathtt{regret}}
\def \meta {\mathtt{meta}\mbox{-}\mathtt{regret}}
\def \epsilon {\varepsilon}

\usepackage{mathtools}
\let\norm\undefined 
\DeclarePairedDelimiter\norm{\lVert}{\rVert}
\DeclarePairedDelimiter\abs{\lvert}{\rvert}
\newcommand\inner[2]{\langle #1, #2 \rangle}

\DeclareMathOperator*{\argmax}{arg\,max}


\usepackage{amsthm}

\newtheorem{myThm}{Theorem}

\newtheorem{myLemma}{Lemma}

\theoremstyle{definition}

\newtheorem{myRemark}{Remark}

\usepackage{graphicx,color} 

\definecolor{wine_red}{RGB}{228,48,64}
\definecolor{DSgray}{cmyk}{0,1,0,0}


\DeclarePairedDelimiter\ceil{\lceil}{\rceil}
\DeclarePairedDelimiter\floor{\lfloor}{\rfloor}

\def \DReg {\textnormal{D-Regret}}

\def \meta {\mathtt{meta}\text{-}\mathtt{regret}}

\def \epsilon {\varepsilon}

\newcommand \sbr[1]{\left( #1 \right)}

\let\paragraph\relax

\newcommand \paragraph[1]{\vspace{2mm}\noindent \textbf{#1}~~}


\hypersetup{
    colorlinks,
    breaklinks,
    urlcolor = black,
    linkcolor = blue,
    citecolor = blue,
}
\usepackage{authblk}

\begin{document}

\title{A Simple Approach for Non-stationary Linear Bandits}

\author{Peng Zhao, Lijun Zhang, Yuan Jiang, Zhi-Hua Zhou}
\affil{National Key Laboratory for Novel Software Technology\\
Nanjing University, Nanjing 210023, China}
\date{}
\maketitle

\begin{abstract}
This paper investigates the problem of non-stationary linear bandits, where the unknown regression parameter is evolving over time. Existing studies develop various algorithms and show that they enjoy an $\Ot(T^{2/3}P_T^{1/3})$ dynamic regret, where $T$ is the time horizon and $P_T$ is the path-length that measures the fluctuation of the evolving unknown parameter. In this paper, we discover that a serious technical flaw makes their results ungrounded, and then present a fix, which gives an $\Ot(T^{3/4}P_T^{1/4})$ dynamic regret without modifying original algorithms. Furthermore, we demonstrate that instead of using sophisticated mechanisms, such as sliding window or weighted penalty, a simple restarted strategy is sufficient to attain the same regret guarantee. Specifically, we design an UCB-type algorithm to balance exploitation and exploration, and restart it periodically to handle the drift of unknown parameters. Our approach enjoys an $\Ot(T^{3/4}P_T^{1/4})$ dynamic regret. Note that to achieve this bound, the algorithm requires an oracle knowledge of the path-length $P_T$. Combining the bandits-over-bandits mechanism by treating our algorithm as the base learner, we can further achieve the same regret bound in a parameter-free way. Empirical studies also validate the effectiveness of our approach.
\end{abstract}

\section{Introduction}
\label{sec:SLB-intro}
Multi-Armed Bandits (MAB)~\citep{Robbins:52} models the sequential decision-making with partial information, where the player requires to choose one of the $K$ slot machines at each iteration in order to maximize the cumulative reward. MAB is a paradigmatic instance of the exploration versus exploitation trade-offs, which is fundamental in many areas of artificial intelligence, such as reinforcement learning~\citep{book:reinforcement-learning} and evolutionary algorithms~\citep{survey'13:EE-evolution}.  

In many real-world decision-making problems, each arm is usually associated with certain side information. Therefore, researchers start to formulate structured bandits in which the reward distributions of each arm are connected by a common but unknown parameter. Particularly, Stochastic Linear Bandits (SLB) has received much attention~\citep{JMLR'02:Auer-linear-bandits,NIPS'07:bandit-lower-bound,AISTATS'11:Chu-linear-bandits,NIPS'11:AY-linear-bandits,COLT'19:Yuan-linear-bandits}. In SLB, at iteration $t$, the player makes a decision $X_t$ from a feasible set $\X \subseteq \R^d$, and then observes the reward $r_t$ satisfying 
\begin{equation}
  \label{eq:LB-model-assume}
  \E[r_t|X_t] = X_t^\T \theta_*,
\end{equation}
where $\theta_*$ is an unknown regression parameter. The goal of the player is to minimize the (pseudo) regret,
\begin{equation}
  \label{eq:regret-stationary}
  \mbox{Regret}_T = T\max_{\x \in \X} \x^{\T}\theta_* - \sum_{t=1}^{T} X_t^\T \theta_*.
\end{equation}
The stochastic linear bandits problem is well-studied in literatures. By exploiting the tool of upper confidence bounds, various approaches demonstrate an $\Ot(d\sqrt{T})$ regret~\citep{NIPS'07:bandit-lower-bound,NIPS'11:AY-linear-bandits},\footnote{We adopt the notation of $\Ot(\cdot)$ to suppress logarithmic factors in the time horizon $T$.} which matches the $\Omega(d\sqrt{T})$ lower bound established by~\citet{NIPS'07:bandit-lower-bound}, up to $\log T$ factors. 

However, the observation model~\eqref{eq:LB-model-assume} assumes that the regression parameter $\theta_*$ is fixed, which is unfortunately hard to satisfy in real-life applications, because data are usually collected in non-stationary environments. For instance, in recommender systems the regression parameter models customers' interests, which could vary over time when customers look through product pages. Therefore, it is crucial to facilitate stochastic linear bandits with capability of handling non-stationarity.

To address above issue,~\citet{AISTATS'19:window-LB} proposed the \emph{non-stationary} linear bandits model, which assumes the reward $r_t$ satisfies
\begin{equation*}
  \label{eq:LB-model-assume-dynamic}
  \E[r_t|X_t] = X_t^\T \theta_t,
\end{equation*}
where $\theta_t$ is the unknown regression parameter at iteration $t$. Different from the standard SLB setting in~\eqref{eq:LB-model-assume}, non-stationary linear bandits allow the unknown parameter to vary over time, whose evolution is often called the path-length defined as $P_T = \sum_{t=2}^{T} \norm{\theta_{t-1} - \theta_t}_2$, which naturally measures the non-stationarity of environments. The player's goal is to minimize the following (pseudo) \emph{dynamic regret},
\begin{equation}
  \label{eq:regret-dynamic}
  \textnormal{D-Regret}_T = \sum_{t=1}^{T} \max_{\x \in \X} \x^{\T}\theta_t - \sum_{t=1}^{T} X_t^\T \theta_t,
\end{equation}
namely, the cumulative regret against the optimal strategy that has full information of unknown parameters.

Recently,~\citet{AISTATS'19:window-LB} first established an $\Omega(d^{2/3}T^{2/3}P_T^{1/3})$ minimax lower bound for the non-stationary linear bandits problem. On the upper bound side,~\citet{AISTATS'19:window-LB} proposed an online UCB-type algorithm called WindowUCB, which is based on the sliding window least square estimator to track the evolving parameters. Subsequently,~\citet{NIPS'19:weighted-LB} developed the WeightUCB algorithm, which adopted the weighted least square estimator for parameter estimation. The authors prove an $\Ot(d^{2/3}T^{2/3}P_T^{1/3})$ dynamic regret guarantee for their algorithms, matching the aforementioned lower bound up to $\log T$ factors. However, we exhibit that a serious technical flaw makes their arguments and regret guarantees ungrounded. We revisit the analysis and demonstrate that it is actually impossible to upper bound the crucial term in their argument by the desired quantity as they expected. Further, we present a fix. Without modifying original algorithms, we prove that their algorithms attain an $\Ot(d^{7/8}T^{3/4}P_T^{1/4})$ dynamic regret. This is the first contribution of this paper.

Furthermore, although these two strategies~\citep{AISTATS'19:window-LB,NIPS'19:weighted-LB} attain nice dynamic regret guarantees (after fixing the technical flaws), their algorithms and analyses are fairly complicated. Instead, we discover that a quite simple algorithm based on the \emph{restarted strategy} (simply running an UCB-style algorithm and restarting it periodically), surprisingly, achieves the same dynamic regret guarantee and is more efficient. This is the second contribution of this paper. Indeed, our proposed algorithm enjoys the following three advantages compared with previous studies.
\begin{itemize}
  \item The proposed algorithm for non-stationary linear bandits is very simple and thus easy to analyze its dynamic regret, only exploiting the standard self-normalized concentration inequality for classical stochastic linear bandits. 
  \item Compared with WindowUCB, the sliding window least square based approach~\citep{AISTATS'19:window-LB}, our approach supports online update and enjoys a one-pass manner \emph{without} storing historical data. Meanwhile, WindowUCB demands an $\O(w)$ memory where $w$ is the window length; by contrast, our approach only requires a \emph{constant} memory. 
  \item Compared with WeightUCB, the weighted least square based approach~\citep{NIPS'19:weighted-LB}, our approach and analysis are much simpler, without involving other complicated deviation results. Additionally, WeightUCB maintains and manipulates the covariance matrix and its variant, and thus takes a longer running time.
\end{itemize}

Overall, our approach is more friendly to the resource-constrained learning scenarios due to its simplicity. In the following, we start with a brief review of related work in Section~\ref{sec:related-work}. Then, Section~\ref{sec:infinite-arm} presents our proposed approach and the theoretical results. Section~\ref{sec:analysis} provides the analysis. We further supply the empirical studies in Section~\ref{sec:experiment} and finally conclude the paper in Section~\ref{sec:conclusion}. Appendix~\ref{appendix:tech-lemmas} supplements technical lemmas.

\section{Related Work}
\label{sec:related-work}
Online learning in non-stationary environments has drawn considerable attention recently, in both full-information and bandits settings. We focus on related work in the bandits setting.

Non-stationary multi-armed bandits problem with abrupt changes was first studied by~\citet{JMLR'02:Auer-linear-bandits}. Denoted by $K$ the number of arms and by $L$ the number of distribution changes,~\citet{JMLR'02:Auer-linear-bandits} proposed \textsc{Exp3.S}, a variant of \textsc{Exp3}, which achieves an $\Ot(\sqrt{KLT})$ regret bound when $L$ is known. The rate is minimax optimal up to $\log T$ factors. Later studies demonstrated that $\Ot(\sqrt{KLT})$ regret is attainable by sliding window and weighted penalty strategies~\citep{ALT'11:switch-MAB}, as well as the restarted strategy~\citep{journal'17:restart-MAB}. All these algorithms require the number of changes $L$ as the input parameter, which is undesired in practice. Recently,~\citet{COLT19:Auer-unknown-bandits} achieved a near-optimal rate $\Ot(\sqrt{KLT})$ without knowing prior knowledge of $L$. On the other hand,~\citet{SS'19:non-stationary-MAB} studied the non-stationary MAB with slowly changing distributions, and proved an $\Ot((K\log K)^{1/3}V_T^{1/3}T^{2/3})$ dynamic regret, where $V_T = \sum_{t=2}^{T} \norm{\boldsymbol{\mu}_t - \boldsymbol{\mu}_{t-1}}_{\infty}$ is the total variation of changes in reward distributions.

Non-stationary linear bandits problem was first studied by~\citet{AISTATS'19:window-LB}. The authors established an $\Omega(d^{2/3}T^{2/3}P_T^{1/3})$ minimax lower bound, and then proposed the WindowUCB algorithm based on the sliding window least square, achieving an $\Ot(d^{7/8}T^{3/4}P_T^{1/4})$ dynamic regret (after fixing the technical gap). Nevertheless, to implement the sliding window least square, WindowUCB needs to store historical data in a buffer. A natural replacement is the weighted least square, which supports online update and enjoys both nice empirical performance and sound theoretical guarantee~\citep{control'93:Guo-ffRLS,TKDE'21:DFOP}. Based on the idea,~\citet{NIPS'19:weighted-LB} proposed the WeightUCB algorithm and proved that the approach attains the same dynamic regret. Nevertheless, both algorithmic design and regret analysis of WeightUCB are fairly complicated. Besides, WeightUCB needs to maintain and manipulate covariance matrix and its variant (in the same scale), which leads to an evidently longer running time. Finally, both WindowUCB and WeightUCB require the unknown quantity $P_T$ as an input. To avoid the limitation,~\citet{AISTATS'19:window-LB} developed the bandits-over-bandits mechanism as a meta algorithm and finally obtained an $\Ot(d^{7/8}T^{3/4}P_T^{1/4})$ parameter-free dynamic regret guarantee.

In this work, we first revisit the analysis of two existing algorithms designed for non-stationary linear bandits in the literature~\citep{AISTATS'19:window-LB,NIPS'19:weighted-LB}. We demonstrate that there exists a technical flaw in the analysis, making the claimed dynamic regret guarantee ungrounded. We present a new analysis to fix the technical gap. Next, we propose a simple algorithm based on the restarted strategy for non-stationary linear bandits and show that the simple algorithm can achieve the same dynamic regret guarantee as existing methods. We note that using the restarted strategy for non-stationary environments is not new, which has been applied in various scenarios, including non-stationary online convex optimization~\citep{OR'15:dynamic-function-VT}, MAB with abrupt changes~\citep{journal'17:restart-MAB}, and MAB with gradual changes~\citep{SS'19:non-stationary-MAB}. However, to the best of our knowledge, our work is the first time to apply the restarted strategy to non-stationary linear bandits.

\section{Our Results}
\label{sec:infinite-arm}
We first introduce the formal problem setup and then present our approach.

\subsection{Problem Setup}
\label{sec:setting-LB}
\paragraph{Setting.} In non-stationary (infinite-armed) linear bandits, at each iteration $t$, let $X_t \in \X \subseteq \R^d$ denote the contextual information of the chosen arm and $r_t$ denote its associated reward, and the model is assumed to be linearly parameterized, i.e., 
\begin{equation}
  \label{eq:model-assume}
  r_t = X_t^\T\theta_t + \eta_t,
\end{equation}
where $\theta_t \in \R^d$ is the unknown parameter and $\eta_t$ is the noise satisfying certain tail condition specified below. As mentioned earlier, to guide the algorithmic design of non-stationary linear bandits, it is natural to employ the following (pseudo) \emph{dynamic regret} as the performance measure:
\begin{equation*}
  \textnormal{D-Regret}_T = \sum_{t=1}^{T} \max_{\x \in \X} \x^{\T}\theta_t - \sum_{t=1}^{T} X_t^{\T}\theta_t,
\end{equation*}
which is the cumulative regret against the optimal strategy that has full information of unknown parameters.

\paragraph{Assumptions.} We assume the noise $\eta_t$ be conditionally $R$-sub-Gaussian with a fixed constant $R>0$. That is, $\E[\eta_t \mid X_{1 : t}, \eta_{1 : t-1}] = 0$, and for any $\lambda \in \R$,
\begin{equation*}
  \E[\exp(\lambda \eta_t) \mid X_{1 : t}, \eta_{1 : t-1}] \leq \exp \left(\frac{\lambda^{2} R^{2}}{2}\right),
\end{equation*}
The feasible set and unknown parameters are assumed to be bounded, i.e., $\forall \x \in \X$, $\norm{\x}_2 \leq L$, and $\norm{\theta_t}_2 \leq S$ holds for all $t \in [T]$. For convenience, we further assume $\inner{\x}{\theta_t} \leq 1$, but we will keep the dependence in $L$ and $S$ for better comprehension of the results.

\subsection{RestartUCB Algorithm}
RestartUCB algorithm has two key ingredients: upper confidence bounds for trading off the exploration and exploitation, and the restarted strategy for handling the non-stationarity of environments. Specifically, our proposed RestartUCB algorithm proceeds in epochs. At each iteration, we first estimate the unknown regression parameter from historical data within the epoch, and then construct upper confidence bounds of the expected reward for selecting the arm. Finally, we periodically restart the algorithm to be resilient to the drift of underlying parameter $\theta_t$. 

In the following, we first specify the estimator used in the RestartUCB algorithm, then investigate its estimate error to construct upper confidence bounds, and finally describe the restarted strategy.

\paragraph{Estimator.} At iteration $t$, we adopt the regularized least square estimator by only exploiting data in the current epoch. More precisely, the estimator $\thetah_t$ is the solution of the following problem:
\begin{equation}
  \label{eq:estimator}
  \min_{\theta}~\lambda \norm{\theta}_2^2 + \sum_{s=t_0}^{t-1}(X_s^\T \theta - r_s)^2,
\end{equation}
where $t_0$ is the starting point of the current epoch, and $\lambda > 0$ is the regularization coefficient. Clearly, $\thetah_t$ admits a closed-form solution as 
\begin{equation}
  \label{eq:close-form}
  \thetah_t=V_{t-1}^{-1}\left(\sum_{s=t_0}^{t-1} r_s X_s \right),
\end{equation}
where $V_{t-1} = \lambda I + \sum_{s=t_0}^{t-1} X_sX_s^\T$. We remark that the estimator~\eqref{eq:close-form} (essentially, both the terms of $V_{t-1}$ and $\sum_{s=t_0}^{t-1} r_s X_s$) can be updated online \emph{without} storing historical data in the memory owing to the restarted strategy. Furthermore, it is known that~\eqref{eq:estimator} can be \emph{exactly} solved by the recursive least square algorithm, whose solution is provably equivalent to the closed-form expression~\eqref{eq:close-form}. This feature can further accelerate our approach in that it saves the computation of the inverse of covariance matrix $V_{t-1}$, which is arguably the most time-consuming step at each iteration.

By contrast,~\citet{AISTATS'19:window-LB} adopted the following sliding window least square estimator, 
\begin{equation}
  \label{eq:sw-close-form}
  \thetah^{\text{sw}}_t=(V^{\text{sw}}_{t-1})^{-1}\Bigg(\sum_{s=1 \vee (t-w)}^{t-1} r_s X_s\Bigg),
\end{equation}
where $V^{\text{sw}}_{t-1} = \lambda I + \sum_{s=1 \vee (t-w)}^{t-1} X_sX_s^\T$ is the covariance matrix formed by historical data in the sliding window and $w > 0$ is the window length. For online update, WindowUCB will remove the oldest data item in the window and then add the new item. So it requires to store the nearest $w$ data items in the memory for future update, resulting in an $\O(w)$ space complexity which cannot be regarded as a constant because the setting of $w$ depends on the time horizon $T$.

\paragraph{Upper Confidence Bounds.} Based on the estimator $\thetah_t$ in~\eqref{eq:close-form}, we further construct upper confidence bounds for the expected reward. To this end, it is required to investigate the estimate error. Inspired by the analysis of WindowUCB~\citep{AISTATS'19:window-LB}, we have the following result.

\begin{myLemma}
\label{lemma:estimate-error}
For any $t \in [T]$ and $\delta \in (0,1)$, with probability at least $1-\delta$, the following holds for all $\x \in \X$,
\begin{equation}
  \label{eq:estimate-error}
  \abs{\x^{\T}(\theta_t - \thetah_t)} \leq L^2\sqrt{\frac{dH}{\lambda}} \sum_{p=t_0}^{t-1} \norm{\theta_p - \theta_{p+1}}_2 + \beta_t  \norm{\x}_{V_{t-1}^{-1}},
\end{equation}
where $H > 0$ is the restarting period (or epoch size), and $\beta_t$ is the radius of confidence region,
\begin{equation}
  \label{eq:confidence-radius}
  \beta_t = \sqrt{\lambda} S + R\sqrt{2\log \frac{1}{\delta} + d\log\left(1 + \frac{(t-t_0)L^2}{\lambda d} \right)}.
\end{equation}
\end{myLemma}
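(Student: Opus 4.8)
The plan is to begin from the closed-form estimator~\eqref{eq:close-form}, substitute the observation model $r_s = X_s^\T \theta_s + \eta_s$, and decompose the estimation error. Using the identity $\sum_{s=t_0}^{t-1} X_s X_s^\T = V_{t-1} - \lambda I$ to convert the data term into $\theta_t$ plus corrections, I would write
\[
  \thetah_t - \theta_t = \underbrace{-\lambda V_{t-1}^{-1}\theta_t}_{\text{(i) regularization}} + \underbrace{V_{t-1}^{-1}\sum_{s=t_0}^{t-1} X_s X_s^\T(\theta_s - \theta_t)}_{\text{(ii) non-stationarity}} + \underbrace{V_{t-1}^{-1}\sum_{s=t_0}^{t-1}\eta_s X_s}_{\text{(iii) noise}}.
\]
Left-multiplying by $\x^\T$ and bounding the three scalars separately, the target reduces to showing that terms (i) and (iii) assemble into $\beta_t\norm{\x}_{V_{t-1}^{-1}}$ while (ii) produces the path-length term.

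For term (i), Cauchy--Schwarz in the $V_{t-1}^{-1}$ inner product together with $V_{t-1}\succeq\lambda I$ (hence $\norm{\theta_t}_{V_{t-1}^{-1}}\le S/\sqrt{\lambda}$) yields the bound $\sqrt{\lambda}S\,\norm{\x}_{V_{t-1}^{-1}}$. For term (iii), Cauchy--Schwarz first gives $\norm{\x}_{V_{t-1}^{-1}}\cdot\norm{\sum_{s=t_0}^{t-1}\eta_s X_s}_{V_{t-1}^{-1}}$; I would then invoke a standard self-normalized concentration inequality for vector-valued martingales on the current epoch (the noise is adapted to the relevant filtration, so restricting the sum to $s\ge t_0$ is harmless), combined with the determinant--trace estimate $\det(V_{t-1})\le(\lambda+(t-t_0)L^2/d)^d$, to control the second factor by $R\sqrt{2\log(1/\delta)+d\log(1+(t-t_0)L^2/(\lambda d))}$ on an event of probability $1-\delta$. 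Summing the two contributions reproduces exactly $\beta_t\norm{\x}_{V_{t-1}^{-1}}$.

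The crux is term (ii), which is precisely the quantity the paper contends earlier work mishandled. I would first telescope $\theta_s - \theta_t = \sum_{p=s}^{t-1}(\theta_p-\theta_{p+1})$, so that $\abs{X_s^\T(\theta_s-\theta_t)}\le L\sum_{p=t_0}^{t-1}\norm{\theta_p-\theta_{p+1}}_2$ uniformly over $s$ in the epoch. Splitting each summand via Cauchy--Schwarz as $\abs{\x^\T V_{t-1}^{-1}X_s}\le\norm{\x}_{V_{t-1}^{-1}}\norm{X_s}_{V_{t-1}^{-1}}$, summing over the at most $H$ indices, applying Cauchy--Schwarz once more, and invoking the trace identity $\sum_{s=t_0}^{t-1}\norm{X_s}_{V_{t-1}^{-1}}^2=\operatorname{tr}\bigl(V_{t-1}^{-1}(V_{t-1}-\lambda I)\bigr)\le d$, together with $\norm{\x}_{V_{t-1}^{-1}}\le L/\sqrt{\lambda}$, gives the factor $L^2\sqrt{dH/\lambda}$ in front of the in-epoch path-length. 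Collecting (i)--(iii) establishes~\eqref{eq:estimate-error}.

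I expect the main obstacle to be conceptual rather than computational: term (ii) genuinely cannot be absorbed into the confidence radius, and the honest bound scales like $\sqrt{H}$ times the in-epoch path-length $\sum_{p=t_0}^{t-1}\norm{\theta_p-\theta_{p+1}}_2$. Resisting the temptation to bound it more aggressively---the step where the cited flaw originates---is exactly what forces the weaker $\Ot(T^{3/4}P_T^{1/4})$ rate rather than $\Ot(T^{2/3}P_T^{1/3})$ in the downstream regret analysis.
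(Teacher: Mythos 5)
Your proposal is correct: the decomposition into regularization, non-stationarity, and noise terms is exactly the paper's split (the paper groups your terms (i) and (iii) into a single quantity $B_t$ and handles them verbatim as you do, via $\norm{\lambda\theta_t}_{V_{t-1}^{-1}}\le\sqrt{\lambda}S$ and the self-normalized martingale bound with the determinant--trace estimate), and your bound on term (ii) reproduces the crucial factor $L^2\sqrt{dH/\lambda}$. Where you genuinely diverge is in how term (ii) is processed. The paper first telescopes and \emph{exchanges the order of summation} to rewrite the term as $\sum_{p=t_0}^{t-1} V_{t-1}^{-1}\bigl(\sum_{s=t_0}^{p}X_sX_s^\T\bigr)(\theta_p-\theta_{p+1})$, and then bounds the \emph{operator norm} $\bigl\lVert V_{t-1}^{-1}\sum_{s=t_0}^{p}X_sX_s^\T\bigr\rVert_2\le L\sqrt{dH/\lambda}$ uniformly over $p$, which requires the matrix $2$-norm characterization (its Lemma~\ref{lemma:matrix-2-norm}) and an optimizer pair $(\z_*,\tilde{\z}_*)$ on the unit sphere, before paying $\norm{\x}_2\le L$. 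You instead keep everything scalar: bound $\abs{X_s^\T(\theta_s-\theta_t)}\le L\sum_{p=t_0}^{t-1}\norm{\theta_p-\theta_{p+1}}_2$ uniformly over $s$, split $\abs{\x^\T V_{t-1}^{-1}X_s}$ by Cauchy--Schwarz, and finish with the $\sqrt{H}$-Cauchy--Schwarz plus the trace identity $\sum_s\norm{X_s}_{V_{t-1}^{-1}}^2\le d$. The core ingredients are identical, but your assembly is more elementary (no operator-norm lemma needed) and incidentally avoids a slightly delicate step in the paper where the scalar $X_s^\T\tilde{\z}_*$ is replaced by $\norm{X_s}_2\norm{\tilde{\z}_*}_2$ inside the norm of a vector sum --- strictly one should apply the triangle inequality first, as you effectively do. What the paper's formulation buys in exchange is that it isolates the quantity $\sigma_{\max}\bigl(V_{t-1}^{-1}\sum_{s=t_0}^{p}X_sX_s^\T\bigr)$ in exactly the form needed for its Section~\ref{sec:revisit} discussion, where Theorem~\ref{thm:impossibility} shows this quantity cannot be bounded by a constant, thereby pinpointing the flaw in the prior $\Ot(T^{2/3}P_T^{1/3})$ analyses; your final remark about the $\sqrt{H}$ factor being unavoidable is precisely the content of that impossibility result.
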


\begin{myRemark}
\label{remark:estimate-error}
The analysis of estimate error serves as the foundation of  designing the UCB-type algorithms. In fact, the pioneering study~\citep{AISTATS'19:window-LB} has studied the estimate error of sliding window least square estimator for non-stationary linear bandits, however, the technical reasoning suffers from some gaps and makes the estimate error bound and the claimed $\Ot(T^{2/3}P_T^{1/3})$ dynamic regret guarantee ungrounded. The flaw appears in a key technical lemma~\citep[Lemma 3]{arXiv'19:window-LB}, and is unfortunately inherited by the later studies including WeightUCB~\citep[Theorem 2]{NIPS'19:weighted-LB}, the early version of this paper~\citep[Lemma 3]{AISTATS'20:restart}, and perturbation based method~\citep[Theorem 7]{UAI'20:kim20a}. In this version, we correct the previous results, at the price of another $\sqrt{dH}$ factor appearing in front of the path-length term comparing to the original result (see Lemma 1 in the early version of our work~\citep{AISTATS'20:restart}). The additional $\sqrt{dH}$ factor in the estimation error will lead to an $\Ot(T^{3/4}P_T^{1/4})$ dynamic regret, which is slightly worse than the original $\Ot(T^{2/3}P_T^{1/3})$ rate.  We present more technical discussions in Section~\ref{sec:revisit}.
\end{myRemark}

The estimate error~\eqref{eq:estimate-error} essentially suggests an upper confidence bound of the expected reward $\x^\T \theta_t$. Hence, we adopt the principle of \emph{optimism in the face of uncertainty}~\citep{JMLR'02:Auer-linear-bandits} and choose the arm that maximizes its upper confidence bound,
\begin{equation}
  \label{eq:select-criteria}
  \begin{split}
    X_t = & \argmax_{\x \in \X} \Big\{ \inner{\x}{\thetah_t} + L^2\sqrt{\frac{dH}{\lambda}}\sum_{p=t_0}^{t-1}\norm{\theta_p - \theta_{p+1}}_2  + \beta_t \norm{\x}_{V_t^{-1}} \Big\} \\
    = & \argmax_{\x \in \X} \big\{ \inner{\x}{\thetah_t} + \beta_t \norm{\x}_{V_t^{-1}} \big\}.
    \end{split}
\end{equation}

So at iteration $t$, the algorithm first solves the estimator based on~\eqref{eq:close-form}, then obtains the confidence radius $\beta_t$ by~\eqref{eq:confidence-radius}, and finally pulls the arm $X_t$ according to the selection criteria~\eqref{eq:select-criteria}. 

\paragraph{Restarted Strategy.} To handle the changes of unknown regression parameters, RestartUCB algorithm proceeds in epochs and restarts the procedure every $H$ iterations, as illustrated in Figure~\ref{figure:restart}. We call the variable $H$ as the restarting period or epoch size, which is the key parameter to trade off between the stability and non-stationarity. In each epoch, RestartUCB performs the UCB-style algorithm as described in the last part. We summarize overall procedures in Algorithm~\ref{alg:restart-UCB}.

\begin{figure}[!h]
    \centering
    \includegraphics[width=0.75\textwidth]{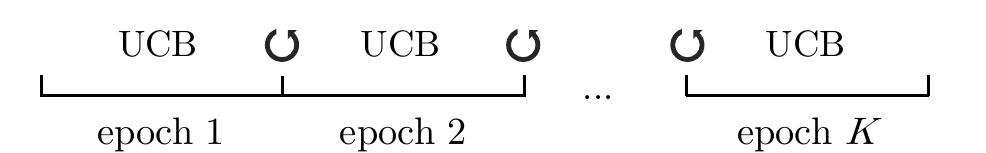}
    \caption{Illustration of RestartUCB algorithm.}
    \label{figure:restart}
\end{figure}

\begin{algorithm}[!t]
   \caption{RestartUCB}
   \label{alg:restart-UCB}
\begin{algorithmic}[1]
\REQUIRE time horizon $T$, restarting period $H$, confidence $\delta$, regularizer $\lambda$, scaling parameters $S$ and $L$
\STATE Set epoch counter $j = 1$
\WHILE{$j \leq \lceil T / H \rceil$}
  \STATE Set $\tau = (j - 1)H$
  \STATE Initialize $X_\tau \in \X$
  \STATE $V_\tau = \lambda I_d$
  \FOR{$t = \tau +1,\ldots,\tau + H - 1$}
    \STATE Compute $\thetah_t = V_{t-1}^{-1}S_{t-1}$
    \STATE Compute $\beta_t$ by~\eqref{eq:confidence-radius} with $t_0 = \tau$
    \STATE Select $X_t = \argmax_{\x \in \X} \{ \inner{\x}{\thetah_t} + \beta_t \norm{\x}_{V_{t-1}^{-1}}\}$
    \STATE Receive the reward $r_t$
    \STATE Update $V_{t} = V_{t-1} + X_t X_t^\T$ and $S_t = S_{t-1} + r_t X_t$
  \ENDFOR
  \STATE Set $j = j + 1$
\ENDWHILE
\end{algorithmic}
\end{algorithm}

We will show in the next subsection that a fixed length is sufficient to achieve the same theoretical guarantees as previous works~\citep{AISTATS'19:window-LB,NIPS'19:weighted-LB}. Nevertheless, since the regret guarantee is not optimal, it would be interesting to see whether an adaptive epoch length with a certain statistical detection will give an improved regret guarantee.

\subsection{Theoretical Guarantees}
\label{sec:infinite-theory}
We show that notwithstanding its simplicity RestartUCB algorithm enjoys the same dynamic regret guarantee as the existing methods for non-stationary linear bandits, including WindowUCB~\citep{AISTATS'19:window-LB} and WeightUCB~\citep{NIPS'19:weighted-LB}. 

In the following, we first analyze the regret within each epoch (Theorem~\ref{thm:dynamic-regret-in-epoch}), and then sum over epochs to obtain the guarantee of the whole time horizon (Theorem~\ref{thm:dynamic-regret}).

\begin{myThm}
\label{thm:dynamic-regret-in-epoch}
For each epoch $\mathcal{E}$ whose size is $H$ and any $\delta \in (0,1)$, with probability at least $1-2\delta$, the dynamic regret within the epoch is upper bounded by 
\begin{equation*}
\begin{split}
  \DReg(\mathcal{E}) \triangleq {} &  \sum_{t \in \mathcal{E}} \max_{\x \in \X} \x^{\T}\theta_t - \sum_{t \in \mathcal{E}} X_t^\T \theta_t \\
  \leq {} & 2L^2\sqrt{\frac{d}{\lambda}}\cdot H^{\frac{3}{2}}\mathcal{P}(\Ecal) + 2\beta_H \sqrt{2dH\log\left(1+ \frac{L^2 H}{\lambda d}\right)},  
\end{split}
\end{equation*}
where $\beta_H = \sqrt{\lambda} S + R\sqrt{2\log(1/\delta) + d\log\left(1 + \frac{HL^2}{\lambda d} \right)}$ is the confidence radius of the epoch, and $\mathcal{P}(\mathcal{E})$ denotes the path-length within epoch $\mathcal{E}$, i.e., $\mathcal{P}(\mathcal{E}) = \sum_{t\in \mathcal{E}} \norm{\theta_{t-1} - \theta_t}_2$.
\end{myThm}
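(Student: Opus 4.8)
The plan is to bound the instantaneous regret at each round and then sum over the $H$ rounds of the epoch. Write $\x_t^* = \argmax_{\x \in \X} \x^\T \theta_t$ for the optimal arm at round $t$, so the instantaneous regret is $(\x_t^*)^\T \theta_t - X_t^\T \theta_t$. I would insert the estimator $\thetah_t$ and split this into three pieces: $(\x_t^*)^\T(\theta_t - \thetah_t)$, then $(\x_t^*)^\T \thetah_t - X_t^\T \thetah_t$, then $X_t^\T(\thetah_t - \theta_t)$. The key observation, which makes the non-stationarity bias manageable, is that the path-length term $L^2\sqrt{dH/\lambda}\sum_{p=t_0}^{t-1}\norm{\theta_p - \theta_{p+1}}_2$ appearing in Lemma~\ref{lemma:estimate-error} does \emph{not} depend on $\x$; hence it is irrelevant to the $\argmax$ in the selection rule~\eqref{eq:select-criteria}, and $X_t$ is exactly the maximizer of $\inner{\x}{\thetah_t} + \beta_t \norm{\x}_{V_{t-1}^{-1}}$.

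For the first and third pieces I would invoke Lemma~\ref{lemma:estimate-error} directly (on its high-probability event), bounding each by $b_t + \beta_t \norm{\cdot}_{V_{t-1}^{-1}}$ evaluated at $\x_t^*$ and $X_t$ respectively, where $b_t \define L^2\sqrt{dH/\lambda}\sum_{p=t_0}^{t-1}\norm{\theta_p - \theta_{p+1}}_2$. For the middle piece I would use optimism: since $X_t$ maximizes the UCB, we have $\inner{\x_t^*}{\thetah_t} + \beta_t \norm{\x_t^*}_{V_{t-1}^{-1}} \le \inner{X_t}{\thetah_t} + \beta_t \norm{X_t}_{V_{t-1}^{-1}}$, so the middle piece is at most $\beta_t \norm{X_t}_{V_{t-1}^{-1}} - \beta_t \norm{\x_t^*}_{V_{t-1}^{-1}}$. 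Adding the three bounds, the two $\beta_t \norm{\x_t^*}_{V_{t-1}^{-1}}$ contributions cancel and I obtain the clean per-round bound $(\x_t^*)^\T \theta_t - X_t^\T \theta_t \le 2 b_t + 2\beta_t \norm{X_t}_{V_{t-1}^{-1}}$.

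It remains to sum over $t \in \Ecal$. For the bias part, $\sum_{p=t_0}^{t-1}\norm{\theta_p - \theta_{p+1}}_2 \le \mathcal{P}(\Ecal)$ for every $t$, and there are $H$ rounds, so $\sum_{t \in \Ecal} 2 b_t \le 2 L^2 \sqrt{dH/\lambda}\cdot H \cdot \mathcal{P}(\Ecal) = 2 L^2 \sqrt{d/\lambda}\, H^{3/2}\mathcal{P}(\Ecal)$, which is the first term of the theorem. For the exploration part, I would use that $\beta_t$ is nondecreasing in $t$ so $\beta_t \le \beta_H$, then Cauchy--Schwarz to pull out a $\sqrt{H}$, namely $\sum_{t \in \Ecal}\norm{X_t}_{V_{t-1}^{-1}} \le \sqrt{H}\,\big(\sum_{t \in \Ecal}\norm{X_t}_{V_{t-1}^{-1}}^2\big)^{1/2}$, and finally the standard elliptical potential lemma (a technical lemma in Appendix~\ref{appendix:tech-lemmas}) to bound $\sum_{t \in \Ecal}\norm{X_t}_{V_{t-1}^{-1}}^2 \le 2d\log(1 + HL^2/(\lambda d))$. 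These combine to the second term $2\beta_H \sqrt{2dH\log(1 + L^2 H/(\lambda d))}$, and the stated confidence level $1-2\delta$ follows from the concentration events underlying Lemma~\ref{lemma:estimate-error}.

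I expect the main obstacle to be conceptual rather than computational: recognizing that the non-stationarity bias $b_t$ enters every round \emph{additively}, because it is common to all arms and therefore survives neither the $\argmax$ in the selection rule nor the optimism-based cancellation, and then accumulates over the $H$ rounds to produce the $H^{3/2}$ scaling. This accumulation is precisely where the corrected $\sqrt{dH}$ factor of Lemma~\ref{lemma:estimate-error} propagates into the regret, so care is needed to track it faithfully rather than the sharper (but flawed) $H$ scaling of the earlier analyses. A secondary technical point is ensuring the normalization condition (e.g.\ $\lambda \ge L^2$, or truncating with $\min(1,\cdot)$) required to apply the elliptical potential lemma.
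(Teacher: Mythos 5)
Your proposal is correct and follows essentially the same route as the paper's proof: the paper applies Lemma~\ref{lemma:estimate-error} at both $X_t^*$ and $X_t$ and then uses the selection rule~\eqref{eq:select-criteria} to cancel the estimator terms, arriving at exactly your per-round bound $2b_t + 2\beta_t\norm{X_t}_{V_{t-1}^{-1}}$, after which the summation (path-length term times $H$ giving the $H^{3/2}$ scaling, and Cauchy--Schwarz plus the elliptical potential Lemma~\ref{lemma:potential} for the exploration term) is identical. Your three-piece decomposition is only a cosmetic reorganization of the paper's two-inequality argument, and your closing remark about the condition $L^2/\lambda \leq 1$ needed for the inequality $1+x \geq e^{x/2}$ inside Lemma~\ref{lemma:potential} is a fair technical caveat, but it applies equally to the paper's own proof rather than marking a difference between the two.
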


By summing the dynamic regret over epochs, we can therefore obtain dynamic regret over of the whole time horizon.
\begin{myThm}
\label{thm:dynamic-regret}
With probability at least $1-1/T$, the dynamic regret of RestartUCB (Algorithm~\ref{alg:restart-UCB}) over the whole time horizon is upper bounded by 
\begin{equation}
    \label{eq:main-result}
    \DReg_T = \sum_{t=1}^{T} \max_{\x \in \X} \x^{\T}\theta_t - \sum_{t=1}^{T} X_t^\T \theta_t \leq \Ot\left(d^{\frac{1}{2}}H^{\frac{3}{2}} P_T + dT/\sqrt{H}\right),
\end{equation}
where $P_T = \sum_{t=2}^{T} \norm{\theta_{t-1} - \theta_t}_2$ is the path-length, and $H$ is the restarting period.

Furthermore, by setting the restarting period optimally as
\begin{equation}
    \label{eq:optimal-tuning-parameter}
    H = \min\{H^*, T\} = \min\left\{ \floor{d^{\frac{1}{4}}T^{\frac{1}{2}} P_T^{-\frac{1}{2}}}, T \right\},
\end{equation}
RestartUCB achieves the following dynamic regret,
\begin{equation}
    \label{eq:optimal-tuning}
    \DReg_T \leq
    \begin{cases}
    \Ot\big(d^{\frac{7}{8}} T^{\frac{3}{4}} P_T^{\frac{1}{4}}\big) & \mbox{ when } P_T \geq \sqrt{d}/T,\vspace{2mm}\\
    \Ot(d\sqrt{T}) & \mbox{ when } P_T < \sqrt{d}/T.
    \end{cases} 
\end{equation}
\end{myThm}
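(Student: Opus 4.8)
The plan is to derive the whole-horizon bound by summing the per-epoch guarantee of Theorem~\ref{thm:dynamic-regret-in-epoch} over the $\ceil{T/H}$ epochs, and then to optimize the free parameter $H$. First I would set up the high-probability statement. Theorem~\ref{thm:dynamic-regret-in-epoch} holds within a single epoch with probability at least $1-2\delta$; since the horizon is partitioned into $\ceil{T/H}$ disjoint epochs, a union bound guarantees that the per-epoch bound holds simultaneously across all epochs with probability at least $1 - 2\delta\ceil{T/H}$. Choosing $\delta = \Theta(H/T^2)$ makes this failure probability at most $1/T$, and since $\delta$ enters only through the $\log(1/\delta)$ term inside $\beta_H$, this choice contributes only a logarithmic factor that is absorbed into $\Ot(\cdot)$.

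Next I would sum the two terms of the per-epoch bound separately. For the path-length term, note that summing $\mathcal{P}(\Ecal) = \sum_{t\in\Ecal}\norm{\theta_{t-1}-\theta_t}_2$ over the disjoint epochs telescopes to at most $P_T$, because the epochs partition the time indices and each consecutive difference $\norm{\theta_{t-1}-\theta_t}_2$ is counted at most once; hence the first term contributes $\Ot(\sqrt{d}\,H^{3/2}P_T)$ after treating $L,\lambda$ as constants. For the base term $2\beta_H\sqrt{2dH\log(1+L^2H/(\lambda d))}$, which recurs once per epoch, I would multiply by $\ceil{T/H} = \Ot(T/H)$ and use $\beta_H = \Ot(\sqrt{d})$ (suppressing $\log$ factors) to obtain $\Ot((T/H)\cdot\sqrt{d}\cdot\sqrt{dH}) = \Ot(dT/\sqrt{H})$. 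Combining the two pieces yields the stated bound~\eqref{eq:main-result}, namely $\DReg_T \leq \Ot(d^{1/2}H^{3/2}P_T + dT/\sqrt{H})$.

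Finally I would tune $H$ by balancing the two terms: setting $\sqrt{d}\,H^{3/2}P_T = dT/\sqrt{H}$ gives $H^2 = \sqrt{d}\,T/P_T$, i.e.\ $H^* = d^{1/4}T^{1/2}P_T^{-1/2}$, matching~\eqref{eq:optimal-tuning-parameter}. Substituting back, each term becomes $\Ot(dT/\sqrt{H^*}) = \Ot(d^{7/8}T^{3/4}P_T^{1/4})$. The case distinction in~\eqref{eq:optimal-tuning} comes from the constraint $H \leq T$: the inequality $H^* \leq T$ is equivalent to $P_T \geq \sqrt{d}/T$, which is the regime where the balanced choice is admissible; when $P_T < \sqrt{d}/T$ the optimum would exceed $T$, so one clips $H = T$ (no restart), and plugging $H = T$ into~\eqref{eq:main-result} recovers the stationary-type rate $\Ot(d\sqrt{T})$.

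I expect the only delicate point to be the bookkeeping rather than any genuine analytic difficulty, since Theorem~\ref{thm:dynamic-regret-in-epoch} already does the heavy lifting. The steps requiring care are (i) confirming that $\sum_{\Ecal}\mathcal{P}(\Ecal) \leq P_T$ under the chosen epoch partition, including the treatment of epoch-boundary differences and the possibly shorter final epoch; (ii) verifying that the $\delta$-dependence and the logarithmic factor $\log(1+L^2H/(\lambda d))$ remain inside $\Ot(\cdot)$ uniformly over the admissible range of $H$; and (iii) handling the floor in $H^*$ together with the $\min\{\cdot,T\}$ clip so that the asymptotic rates are unaffected.
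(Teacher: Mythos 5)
Your proposal is correct and follows essentially the same route as the paper's own proof: sum the per-epoch bound of Theorem~\ref{thm:dynamic-regret-in-epoch} over the $\ceil{T/H}$ epochs with a union bound (the paper's choice $\delta \approx 1/(T\ceil{T/H})$ matches your $\Theta(H/T^2)$), telescope the epoch path-lengths into $P_T$, multiply the elliptical-potential term by the number of epochs using $\beta_H = \Ot(\sqrt{d})$, and then balance $H$ exactly as you do, including the clip $H=T$ in the regime $P_T < \sqrt{d}/T$. No gaps; your bookkeeping points (i)--(iii) are precisely the minor details the paper also handles implicitly.
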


\begin{myRemark}
As shown in Theorem~\ref{thm:dynamic-regret}, the setting of optimal restarting period $H^*$ in~\eqref{eq:optimal-tuning-parameter} requires the prior information of path-length $P_T$, which is generally unavailable. We will discuss how to remove the undesired dependence in the next subsection.
\end{myRemark}

\subsection{Adapting to Unknown Non-stationarity}
\label{sec:unknown-path-length}
As mentioned earlier, the restarting period $H$ plays a key role in dealing with the non-stationarity of environments. Intuitively, a small restarting period should be employed when the environments change very dramatically, and a large one should be used when the environments are relatively stable. Our theoretical result also validates the intuition. As one can see in Theorem~\ref{thm:dynamic-regret}, the restarting period $H$ trades off between the path-length term $P_T$ and the total horizon $T$, which necessitates an appropriate balance. Nevertheless, the optimal configuration of restarting period, as shown in~\eqref{eq:optimal-tuning-parameter}, requires the prior knowledge of path-length $P_T$, which essentially measures the non-stationarity of underlying environments and is thus generally unavailable. 

To compensate the lack of this information of environmental non-stationarity, we design an online ensemble method~\citep{book'12:ensemble-zhou} by employing the meta-base framework, which is recently used in full-information non-stationarity online learning~\citep{ICML'15:Daniely-adaptive,AISTATS'17:coin-betting-adaptive,NIPS'18:Zhang-Ader,NIPS'19:Zheng,AISTATS'20:Zhang,UAI'20:simple,NIPS'20:sword} and non-stationary bandit online learning~\citep{COLT'17:Corralling-bandits,AISTATS'19:window-LB,AISTATS'20:BCO}. In this paper, to deal with the issue for non-stationary linear bandits, we employ the \emph{Bandits-over-Bandits} (BOB) mechanism, proposed by~\citet{AISTATS'19:window-LB} in designing parameter-free algorithm for non-stationary linear bandits based on sliding window least square estimator. 

\begin{figure}[!t]
    \centering
    \includegraphics[width=0.85\textwidth]{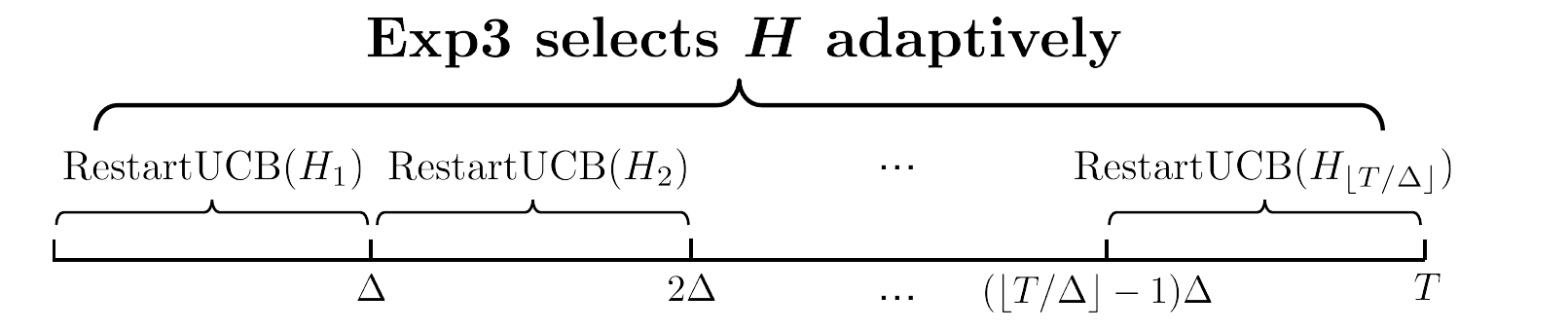}
    \caption{Illustration of Bandits-over-Bandits mechanism~\citep{arXiv'19:window-LB} with application to our proposed RestartUCB algorithm. The overall algorithm is in two-layer meta-base structure: the algorithm treats RestartUCB as the base-learner to handle non-stationary linear bandits with a given restarting period, and employs \textsc{Exp3} as the meta-learner to adaptively choose the optimal restarting period $H$. }
    \label{figure:restartBOB}
\end{figure}

In the following, we describe how to apply the BOB mechanism to eliminate the requirement of the unknown path-length in RestartUCB. The essential idea is to use the RestartUCB algorithm as the base-learner to handle non-stationary linear bandits with a given restarting period, and on top of that we will employ a second-layer bandits algorithm as the meta-learner to adaptively learn the optimal restarting period. We name the RestartUCB algorithm with Bandits-over-Bandits mechanism as ``RestartUCB-BOB''. Figure~\ref{figure:restartBOB} illustrates the meta-base structure of RestartUCB-BOB algorithm. To be more concrete, although the exact value of the optimal restarting period $H^*$ (or equivalently, the path-length $P_T$) is unknown, we can make some guess of its possible value, as the $P_T$ is always bounded. Then, we can use a certain meta-algorithm to adaptively track the best restarting period. To achieve this goal, RestartUCB-BOB first requires to examine the performance of base-learner with different restarting period. Therefore, RestartUCB-BOB will perform in several episodes, and in each episode RestartUCB-BOB employs the base-algorithm RestartUCB with a particular restarting period and receives the returned cumulative reward over the episode as the reward feedback. We denote by $\Delta \in [T]$ the episode length. The restarting period will be adaptively adjusted by employing \textsc{Exp3}~\citep{SICOMP'02:Auer-EXP3} as the meta-algorithm. In the configuration of RestartUCB-BOB, the episode length is set $\Delta = \ceil{d\sqrt{T}}$. Further, the pool of candidate restarting periods $\H$ is configured as follows:
\begin{equation}
  \label{eq:candidate-pool}
  \H = \left\{H_i = \floor{ d^{\frac{1}{4}} S^{-\frac{1}{2}} \cdot 2^{i-1}} \mid i \in [N]\right\},
\end{equation}
where $N = \ceil{\frac{1}{2}\log_2(ST)} + 1$ is the number of candidate restarting periods and recall that $S$ is the upper bound of the norm of underlying regression parameters as specified in Section~\ref{sec:setting-LB}. Let $H_{\min}$ ($H_{\max}$) be the minimal (maximal) restarting period in the pool $\H$, then it is evident to verify that
\begin{equation}
  \label{eq:min-max-epoch-size}
  H_{\min} = \floor{d^{\frac{1}{4}} S^{-\frac{1}{2}}}, H_{\max} = \floor{d^{\frac{1}{4}}\sqrt{T}} \leq \Delta. 
\end{equation}
To conclude, RestartUCB-BOB is in a two-layer meta-base structure and will perform in episodes. In each episode, the base-learner is RestartUCB associated with a particular restarting period in the candidate pool determined by the meta-learner \textsc{Exp3}; besides, the cumulative reward of the base-learner within the episode is fed to the meta-learner as the feedback to adaptively choose a better restarting period. We refer the reader to Section 7 of~\citet{arXiv'19:window-LB} for more descriptions of algorithmic details.

The following theorem presents the dynamic regret guarantee for RestartUCB-BOB. Note that the algorithm does not require the prior knowledge of the path-length $P_T$.
\begin{myThm}
\label{thm:dynamic-regret-BOB}
\textsc{RestartUCB} together with Bandits-over-Bandits mechanism satisfies
\begin{equation}
  \label{eq:dynamic-order-infinite}
  \textnormal{D-Regret}_T = \sum_{t=1}^{T} \max_{\x \in \X} \x^{\T}\theta_t - \sum_{t=1}^{T} X_t^\T \theta_t \leq \Ot\big(d^{\frac{7}{8}} T^{\frac{3}{4}} P_T^{\frac{1}{4}}\big),
\end{equation}
without requiring the path-length $P_T$ ahead of time.
\end{myThm}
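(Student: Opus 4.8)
The plan is to follow the standard meta-base (Bandits-over-Bandits) regret decomposition. Writing the horizon as $W=\ceil{T/\Delta}$ episodes of length $\Delta=\ceil{d\sqrt{T}}$ and letting $H^\dagger\in\H$ be the fixed restarting period that \textsc{Exp3} is compared against, I would insert and subtract the (expected) cumulative reward $\mathcal{R}(H^\dagger)$ obtained by running RestartUCB with $H^\dagger$ in every episode, giving
\begin{equation*}
\DReg_T = \underbrace{\Big(\sum_{t=1}^T \max_{\x\in\X} \x^\T\theta_t - \mathcal{R}(H^\dagger)\Big)}_{\textrm{base-regret}} \;+\; \underbrace{\big(\mathcal{R}(H^\dagger) - \mathcal{R}_{\mathrm{alg}}\big)}_{\textrm{meta-regret}},
\end{equation*}
where $\mathcal{R}_{\mathrm{alg}}$ is the cumulative reward actually collected by RestartUCB-BOB. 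The base-regret measures the performance of RestartUCB run with the single best period in $\H$, and the meta-regret measures the price \textsc{Exp3} pays for not knowing that period in advance.

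To control the base-regret I would observe that running RestartUCB with the \emph{fixed} period $H^\dagger$ over the whole horizon---together with the harmless extra restarts forced at the $W$ episode boundaries---is precisely the setting analysed in Theorem~\ref{thm:dynamic-regret}. Hence the base-regret is at most $\Ot\big(d^{1/2}(H^\dagger)^{3/2}P_T + dT/\sqrt{H^\dagger}\big)$, the boundary restarts contributing only a lower-order correction because $\Delta\geq H_{\max}\geq H^\dagger$. It then remains to certify that the geometric pool $\H$ contains a period near the minimiser $H^*=d^{1/4}T^{1/2}P_T^{-1/2}$ of $f(H):=d^{1/2}H^{3/2}P_T + dT/\sqrt{H}$. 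Since consecutive candidates differ by a factor of two and $\H$ spans $[H_{\min},H_{\max}]=[\floor{d^{1/4}S^{-1/2}},\floor{d^{1/4}\sqrt{T}}]$, and since the clipped optimum $\min\{H^*,H_{\max}\}$ lies in this range over the admissible span $\sqrt{d}/T\leq P_T\leq 2ST$, I can select $H^\dagger\in\H$ with $H^\dagger\leq\min\{H^*,H_{\max}\}\leq 2H^\dagger$. The stability of $f$ under factor-two perturbations then yields $f(H^\dagger)=\Ot(d^{7/8}T^{3/4}P_T^{1/4})$, recovering the tuned bound of Theorem~\ref{thm:dynamic-regret}.

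To control the meta-regret, \textsc{Exp3} is run over the $N=\Ot(1)$ candidate restarting periods for $W$ rounds, where the reward of each round equals the base-learner's cumulative reward over one episode and hence lies in an interval of width $\Ot(\Delta)$ (using $\x^\T\theta_t\leq 1$). The standard \textsc{Exp3} regret bound then gives an (expected) meta-regret of order $\Delta\sqrt{WN\log N}$; plugging in $W=\sqrt{T}/d$, $\Delta=d\sqrt{T}$ and $N=\Ot(1)$ produces meta-regret $=\Ot(d^{1/2}T^{3/4})$.

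Combining the two estimates and using $d\geq 1$ with $P_T\geq 1$---so that $d^{1/2}T^{3/4}\leq d^{7/8}T^{3/4}\leq d^{7/8}T^{3/4}P_T^{1/4}$---shows the meta-regret is dominated by the base-regret, whence $\DReg_T=\Ot(d^{7/8}T^{3/4}P_T^{1/4})$; the small-$P_T$ corner is absorbed into the $\Ot$ exactly as in Theorem~\ref{thm:dynamic-regret}. I expect the main obstacle to be twofold. First, one must verify that the fixed pool genuinely brackets the unknown optimum $H^*$ uniformly over every admissible path-length; this requirement is what pins down the specific choices of $\Delta$, $H_{\min}$, $H_{\max}$, and $N$, and keeping these constants mutually consistent is the delicate bookkeeping. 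Second, the probabilistic statements must be reconciled: Theorem~\ref{thm:dynamic-regret} is a high-probability bound whereas the \textsc{Exp3} guarantee is in expectation, so one should condition on the base learner's good event---and argue that the zero-mean sub-Gaussian noise entering the reward fed to the meta-learner does not bias the \textsc{Exp3} update---before the two guarantees may legitimately be summed.
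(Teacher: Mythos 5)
Your decomposition, the base-regret argument (applying Theorem~\ref{thm:dynamic-regret} with the best candidate $H^{\dagger}$, plus the factor-two bracketing of the tuned period by the geometric pool $\H$), and the target meta-regret rate $\Ot(d^{1/2}T^{3/4})$ all coincide with the paper's proof. The one genuine gap is in how you justify invoking the \textsc{Exp3} guarantee: you assert that each episode's reward ``lies in an interval of width $\Ot(\Delta)$ (using $\x^{\T}\theta_t \leq 1$)'', but the feedback actually handed to the meta-learner is the \emph{noisy} cumulative reward $\sum_t (X_t^{\T}\theta_t + \eta_t)$, and the sub-Gaussian noise terms $\eta_t$ are unbounded, so no deterministic bound of this form exists. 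This is exactly what the paper's Lemma~\ref{lemma:bob} supplies: a concentration argument showing that, with probability at least $1-2/T$ simultaneously over all $\ceil{T/\Delta}$ episodes, each episode's reward is bounded by $LS\Delta + 2R\sqrt{\Delta\ln(T/\sqrt{\Delta})} = \Ot(\Delta)$. The paper then conditions on this event $\Ecal$, applies the \textsc{Exp3} bound with scale $L_{\max}$ on $\Ecal$, and pays a trivial $\O(T)\cdot\Pr[\overline{\Ecal}] = \O(1)$ on the complement via the law of total expectation.

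You do flag the probabilistic reconciliation at the end, but you misdiagnose it as needing to show the zero-mean noise ``does not bias the Exp3 update''. Bias is not the issue---zero-mean noise introduces none; the issue is that \textsc{Exp3}'s regret bound scales with the \emph{range} of the rewards it sees, and that range must be controlled with high probability rather than assumed. Until that concentration-plus-conditioning step (the content of Lemma~\ref{lemma:bob}) is carried out, your meta-regret bound is unsupported. A secondary, cosmetic point: your final domination step invokes $P_T \geq 1$, which the theorem does not assume; the paper's proof likewise absorbs the small-$P_T$ regime into the $\Ot(\cdot)$ without comment, so this is shared imprecision rather than a new error.
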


\begin{myRemark}
From the theorem, we can observe that RestartUCB-BOB enjoys the same dynamic regret bound as RestartUCB with an oracle tuning~\eqref{eq:optimal-tuning} shown in Theorem~\ref{thm:dynamic-regret}, while RestartUCB-BOB now requires no prior knowledge on the environmental non-stationarity measure $P_T$. Nevertheless, the attained dynamic regret upper bound still exhibits a certain gap to the $\Omega(d^{2/3}T^{2/3}P_T^{1/3})$ minimax lower bound of non-stationary linear bandits~\citep{AISTATS'19:window-LB}. Therefore, it remains open on how to obtain rate-optimal and parameter-free dynamic regret. Indeed, even with an oracle tuning, RestartUCB still cannot achieve optimal dynamic regret. We are not sure whether this is due to the limitation of the regret analysis or the algorithm itself. Finally, we note that recent studies achieve near-optimal dynamic regret without prior information for multi-armed bandits~\citep{EWRL'18:Auer-MAB,COLT19:Auer-unknown-bandits} and contextual bandits~\citep{COLT'19:para-free-contextual} by means of change detection. These studies could be useful in designing parameter-free algorithms for non-stationary linear bandits, which will be investigated in the future.
\end{myRemark}

\section{Analysis}
\label{sec:analysis}
In this section, we provide proofs of theoretical results presented in the previous section.
\subsection{Proof of Lemma~\ref{lemma:estimate-error}}
\begin{proof}
From the model assumption~\eqref{eq:model-assume} and the estimator~\eqref{eq:close-form}, we can verify that the estimate error can be decomposed as, 
\begin{equation*}
    \thetah_t - \theta_t = V_{t-1}^{-1}\Bigg(\sum_{s=t_0}^{t-1} X_s X_s^\T (\theta_s - \theta_t) + \sum_{s=t_0}^{t-1} \eta_s X_s - \lambda \theta_t\Bigg).
\end{equation*}
Therefore, by Cauchy-Schwarz inequality, we know that for any $\x \in \X$, 
\begin{equation}
  \label{eq:bound-cauchy}
  \abs{\x^{\T}(\thetah_t - \theta_t)} \leq \norm{\x}_2\cdot A_t + \norm{\x}_{V_{t-1}^{-1}}\cdot B_t,
\end{equation}
where 
\begin{align*}
A_t = \left\| V_{t-1}^{-1}\left(\sum_{s=t_0}^{t-1} X_s X_s^\T (\theta_s - \theta_t)\right)\right\|_2, \mbox{  and } B_t = \left\|\sum_{s=t_0}^{t-1} \eta_s X_s - \lambda \theta_t\right\|_{V_{t-1}^{-1}}.
\end{align*}

We will give upper bounds for these two terms separately, as summarized in the following two lemmas.
\begin{myLemma}
\label{lemma:path-length-A_t}
For any $t \in [T]$, we have 
\begin{equation}
    \label{eq:path-length-A_t}
    \left\| V_{t-1}^{-1}\bigg(\sum_{s=t_0}^{t-1} X_s X_s^\T (\theta_s - \theta_t)\bigg)\right\|_2 \leq L \sqrt{\frac{dH}{\lambda}}\sum_{p=t_0}^{t-1} \norm{\theta_p - \theta_{p+1}}_2.
\end{equation}
\end{myLemma}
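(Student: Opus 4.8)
The plan is to rewrite the parameter difference $\theta_s - \theta_t$ as a telescoping sum of consecutive increments, swap the order of summation so that each increment $\theta_p - \theta_{p+1}$ becomes the free variable, and then bound the linear operator that multiplies each increment. Concretely, I would begin from the identity $\theta_s - \theta_t = \sum_{p=s}^{t-1}(\theta_p - \theta_{p+1})$. Substituting this into the inner sum and exchanging the two summations (for fixed $p$, the surviving indices are $t_0 \le s \le p$), the matrix of interest becomes $V_{t-1}^{-1}\sum_{p=t_0}^{t-1} W_p(\theta_p-\theta_{p+1})$, where $W_p \define \sum_{s=t_0}^{p} X_s X_s^\T$ is the partial covariance up to index $p$. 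Applying the triangle inequality over $p$ then reduces the task to controlling $\opnorm{V_{t-1}^{-1}W_p}$ for each $p$.

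The crucial and delicate step is bounding this operator norm. The naive route is to observe that $W_p \preceq V_{t-1}$ in the Loewner order and conclude $\opnorm{V_{t-1}^{-1}W_p}\le 1$; this is exactly the gap flagged in Remark~\ref{remark:estimate-error}, since $V_{t-1}^{-1}W_p$ is \emph{not} symmetric and PSD ordering does not transfer to the operator norm of a non-symmetric product. Instead I would split $W_p$ into its rank-one constituents and use that for a rank-one update $\opnorm{V_{t-1}^{-1} X_s X_s^\T} = \norm{V_{t-1}^{-1}X_s}_2\norm{X_s}_2 \le \frac{L}{\sqrt{\lambda}}\norm{X_s}_{V_{t-1}^{-1}}$, where the last inequality uses $\norm{X_s}_2\le L$ together with $\opnorm{V_{t-1}^{-1}}\le 1/\lambda$ (because $V_{t-1}\succeq\lambda I$). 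Summing the rank-one bounds over $s$ gives $\opnorm{V_{t-1}^{-1}W_p}\le \frac{L}{\sqrt{\lambda}}\sum_{s=t_0}^{p}\norm{X_s}_{V_{t-1}^{-1}}\le \frac{L}{\sqrt{\lambda}}\sum_{s=t_0}^{t-1}\norm{X_s}_{V_{t-1}^{-1}}$, which no longer depends on $p$.

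To finish, I would control $\sum_{s=t_0}^{t-1}\norm{X_s}_{V_{t-1}^{-1}}$ by Cauchy–Schwarz followed by an elliptic-potential/trace computation: $\sum_{s=t_0}^{t-1}\norm{X_s}_{V_{t-1}^{-1}}\le\sqrt{(t-t_0)\sum_{s=t_0}^{t-1}\norm{X_s}_{V_{t-1}^{-1}}^2}$, and $\sum_{s=t_0}^{t-1}\norm{X_s}_{V_{t-1}^{-1}}^2 = \mathrm{tr}\!\left(V_{t-1}^{-1}(V_{t-1}-\lambda I)\right) = d - \lambda\,\mathrm{tr}(V_{t-1}^{-1})\le d$. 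Since $t-t_0\le H$, this yields $\sum_{s}\norm{X_s}_{V_{t-1}^{-1}}\le\sqrt{dH}$. Combining the telescoping decomposition, the per-$p$ operator-norm bound, and this estimate produces $\sum_{p=t_0}^{t-1}\opnorm{V_{t-1}^{-1}W_p}\norm{\theta_p-\theta_{p+1}}_2 \le L\sqrt{dH/\lambda}\sum_{p=t_0}^{t-1}\norm{\theta_p-\theta_{p+1}}_2$, which is the claimed inequality.

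The main obstacle is precisely the operator-norm bound in the second paragraph. Resisting the tempting but invalid PSD shortcut, and instead paying the extra $\sqrt{dH}$ factor through the rank-one decomposition and the trace argument, is exactly what distinguishes this corrected estimate from the earlier flawed one; every other step is routine algebra and standard self-normalized bookkeeping.
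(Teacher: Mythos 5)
Your proof is correct and follows essentially the same route as the paper's: telescope $\theta_s-\theta_t$, swap the summations, reduce to bounding $\lVert V_{t-1}^{-1}\sum_{s=t_0}^{p}X_sX_s^{\T}\rVert_2$, arrive at the intermediate quantity $\frac{L}{\sqrt{\lambda}}\sum_{s}\norm{X_s}_{V_{t-1}^{-1}}$, and finish with Cauchy--Schwarz (giving $\sqrt{H}$) plus the trace identity (giving $\sqrt{d}$). The only difference is a local one in the middle step---you split the partial covariance into rank-one pieces and use $\opnorm{V_{t-1}^{-1}X_sX_s^{\T}}=\lVert V_{t-1}^{-1}X_s\rVert_2\norm{X_s}_2$, whereas the paper uses the bilinear-form characterization of the matrix $2$-norm together with a $V_{t-1}^{-1}$-weighted Cauchy--Schwarz---and your variant is, if anything, slightly cleaner while yielding the identical bound.
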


\begin{myLemma}
\label{lemma:path-length-B_t}
For any $t \in [T]$, we have 
\begin{equation}
    \left\|\sum_{s=t_0}^{t-1} \eta_s X_s - \lambda \theta_t\right\|_{V_{t-1}^{-1}} \leq \sqrt{\lambda} S + R\sqrt{2\log \frac{1}{\delta} + d\log\left(1 + \frac{(t-t_0)L^2}{\lambda d} \right)},
\end{equation}
where $\beta_t \triangleq \sqrt{\lambda} S + R\sqrt{2\log \frac{1}{\delta} + d\log\left(1 + \frac{(t-t_0)L^2}{\lambda d} \right)}$ is the confidence radius used in RestartUCB.
\end{myLemma}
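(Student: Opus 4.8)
The plan is to split the weighted norm by the triangle inequality, separating a purely deterministic regularization contribution from a self-normalized martingale term, and then bound each piece independently. Concretely, I would first write
\[
\norm{\sum_{s=t_0}^{t-1} \eta_s X_s - \lambda \theta_t}_{V_{t-1}^{-1}} \leq \norm{\sum_{s=t_0}^{t-1} \eta_s X_s}_{V_{t-1}^{-1}} + \lambda \norm{\theta_t}_{V_{t-1}^{-1}},
\]
so that the two summands of the target bound $\beta_t$ correspond exactly to these two terms. The deterministic term is immediate: since $V_{t-1} = \lambda I + \sum_{s=t_0}^{t-1} X_s X_s^{\T} \succeq \lambda I$, we have $V_{t-1}^{-1} \preceq \lambda^{-1} I$, whence $\lambda \norm{\theta_t}_{V_{t-1}^{-1}} = \lambda\sqrt{\theta_t^{\T} V_{t-1}^{-1}\theta_t} \leq \sqrt{\lambda}\,\norm{\theta_t}_2 \leq \sqrt{\lambda}\,S$, using the boundedness assumption $\norm{\theta_t}_2 \leq S$ from Section~\ref{sec:setting-LB}. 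This recovers the first summand $\sqrt{\lambda}S$.

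For the stochastic term I would invoke the standard self-normalized concentration inequality for vector-valued martingales \citep{NIPS'11:AY-linear-bandits}. The sub-Gaussian noise assumption together with $\E[\eta_t \mid X_{1:t},\eta_{1:t-1}] = 0$ guarantees that $\{\eta_s X_s\}$ is a martingale difference sequence adapted to the natural filtration, so with probability at least $1-\delta$,
\[
\norm{\sum_{s=t_0}^{t-1} \eta_s X_s}_{V_{t-1}^{-1}}^2 \leq 2R^2 \log\!\left( \frac{\det(V_{t-1})^{1/2}\det(\lambda I)^{-1/2}}{\delta} \right).
\]
I would then bound the determinant by the trace--determinant (AM--GM) inequality, $\det(V_{t-1}) \leq (\mathrm{tr}(V_{t-1})/d)^d \leq (\lambda + (t-t_0)L^2/d)^d$, since each rank-one update contributes at most $\norm{X_s}_2^2 \leq L^2$ to the trace and there are at most $t-t_0$ of them. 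Dividing by $\det(\lambda I) = \lambda^d$ and taking square roots yields the second summand $R\sqrt{2\log(1/\delta) + d\log(1 + (t-t_0)L^2/(\lambda d))}$, and summing the two bounds reproduces $\beta_t$ exactly.

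The only nontrivial ingredient is the self-normalized inequality itself; everything else is elementary linear algebra, so I do not anticipate a genuine obstacle. The one point worth verifying carefully is that the restart does not break the martingale structure: because the algorithm discards all data before $t_0$, the sum runs only over the current epoch, yet $\{\eta_s X_s\}_{s \geq t_0}$ remains a martingale difference sequence with respect to the same conditioning variables. Hence the classical stationary concentration argument applies verbatim with the epoch playing the role of the full horizon, which is exactly what makes the restarted analysis simple.
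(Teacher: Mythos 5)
Your proposal is correct and follows essentially the same route as the paper's own proof: a triangle-inequality split into the martingale term and the regularization term, the self-normalized concentration inequality of \citet{NIPS'11:AY-linear-bandits} combined with the trace--determinant bound $\det(V_{t-1}) \leq (\lambda + (t-t_0)L^2/d)^d$ for the former, and $V_{t-1} \succeq \lambda I$ with $\norm{\theta_t}_2 \leq S$ for the latter. Your added check that the restart preserves the martingale difference structure within the epoch is a point the paper leaves implicit, but it is the same argument.
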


Based on the inequality~\eqref{eq:bound-cauchy}, Lemma~\ref{lemma:path-length-A_t}, Lemma~\ref{lemma:path-length-B_t}, and the boundedness of the feasible set, we have for any $\x \in \X$,
\[
  \abs{\x^{\T}(\theta_t - \thetah_t)} \leq L^2\sqrt{\frac{dH}{\lambda}} \sum_{p=t_0}^{t-1} \norm{\theta_p - \theta_{p+1}}_2 + \beta_t  \norm{\x}_{V_{t-1}^{-1}},
\]
which competes the proof.
\end{proof}

We proceed to prove Lemma~\ref{lemma:path-length-A_t} and Lemma~\ref{lemma:path-length-B_t}. It is noteworthy mentioning that previous works in non-stationary linear bandits~\citep{arXiv'19:window-LB,NIPS'19:weighted-LB,AISTATS'20:restart} also need to upper bound some quantities similar to $A_t$, while their results are general invalid due to a serious technical flaw that will be explicitly stated in Section~\ref{sec:revisit}. Lemma~\ref{lemma:path-length-A_t} serves as the key component to fix existing results. 

\begin{proof}[{Proof of Lemma~\ref{lemma:path-length-A_t}}]
Notice that 
\begin{align}
\left\|V_{t-1}^{-1} \bigg(\sum_{s=t_0}^{t-1} X_s X_s^\T (\theta_s - \theta_t)\bigg)\right\|_2 = {} & \left\|V_{t-1}^{-1} \bigg(\sum_{s=t_0}^{t-1} X_s X_s^\T \Big(\sum_{p=s}^{t-1} (\theta_p - \theta_{p+1})\Big)\bigg)\right\|_2 \nonumber \\
  = {} & \left\|V_{t-1}^{-1} \bigg(\sum_{p=t_0}^{t-1} \Big(\sum_{s=t_0}^{p} X_s X_s^\T (\theta_p - \theta_{p+1})\Big)\bigg)\right\|_2 \nonumber \\
  \leq {} & \sum_{p=t_0}^{t-1} \left\|V_{t-1}^{-1} \Big(\sum_{s=t_0}^{p} X_s X_s^\T\Big) (\theta_p - \theta_{p+1})\right\|_2 \nonumber \\
  \leq {} & \sum_{p=t_0}^{t-1} \left\|V_{t-1}^{-1} \Big(\sum_{s=t_0}^{p} X_s X_s^\T\Big)\right\|_2 \norm{\theta_p - \theta_{p+1}}_2.\nonumber
\end{align}

We now derive the upper bound for the term $\|V_{t-1}^{-1} (\sum_{s=t_0}^{p} X_s X_s^\T)\|_2$. Denote by $\S(1) = \{\x \mid \norm{\x}_2 = 1\}$ the unit sphere.
\begin{align*}
   \left \| V_{t-1}^{-1} \Big(\sum_{s=t_0}^{p} X_s X_s^\T\Big) \right \|_2 = {} & \sup_{\z \in \S(1)} \sup_{\tilde{\z} \in \S(1)} \left| \z^\T V_{t-1}^{-1} \Big(\sum_{s=t_0}^{p} X_s X_s^\T\Big) \tilde{\z} \right| \\  
   = {} & \left| \z_*^\T V_{t-1}^{-1} \Big(\sum_{s=t_0}^{p} X_s X_s^\T\Big) \tilde{\z}_* \right| \\
  \leq {} & \norm{\z_*}_{V_{t-1}^{-1}} \left\| \sum_{s=t_0}^{p} X_s (X_s^\T \tilde{\z}_*)\right\|_{V_{t-1}^{-1}}\\
  \leq {} & \norm{\z_*}_{V_{t-1}^{-1}} \left\| \sum_{s=t_0}^{p} X_s \norm{X_s}_2 \norm{\tilde{\z}_*}_2\right\|_{V_{t-1}^{-1}}\\
  \leq {} & \frac{L}{\sqrt{\lambda}} \left\| \sum_{s=t_0}^{p} X_s\right\|_{V_{t-1}^{-1}}\\
  \leq {} & \frac{L}{\sqrt{\lambda}} \sum_{s=t_0}^{p} \norm{X_s}_{V_{t-1}^{-1}} \\
  \leq {} & \frac{L}{\sqrt{\lambda}} \sqrt{H} \sqrt{\sum_{s=t_0}^{p} \norm{X_s}_{V_{t-1}^{-1}}^2} \\ 
  \leq {} & L \sqrt{\frac{dH}{\lambda}}.
\end{align*} 
In above, the first equation makes use of the property of the matrix $2$-norm: for a matrix $M \in \R^{m \times n}$, $\norm{M}_2 = \sup_{\norm{\x}_2 = 1} \sup_{\norm{\y}_2 = 1} \abs{\x^\T M \y}$, whose proof can be found from the book~\citep[Chapter 5, Eq.~(5.2.9)]{meyer2000matrix} and also Lemma~\ref{lemma:matrix-2-norm} for self-containedness. Further, $(\z_*,\tilde{\z}_*)$ denotes the optimizer of the right hand side optimization problem in the first line. In the proof, we use the fact that for any $\x$, we have $\norm{\x}_{V_{t-1}^{-1}} \leq \norm{\x}_2/\sqrt{\lambda} $ as $V_{t-1} \succeq \lambda I$. The second last step holds by the Cauchy-Schwarz inequality. Besides, the last step follows from the fact: for any $p \in \{t_0,\ldots,t-1\}$,
\begin{align*} 
    {} & \sum_{s=t_0}^{p} \norm{X_s}_{V_{t-1}^{-1}}^2 = \sum_{s=t_0}^{p} \mathrm{Tr}(X_s^\T V_{t-1}^{-1} X_s) \\
  = {} & \mathrm{Tr}\left(V_{t-1}^{-1} \sum_{s=t_0}^{p} X_s X_s^\T\right) \\
  \leq {} & \mathrm{Tr}\left(V_{t-1}^{-1} \sum_{s=t_0}^{p} X_s X_s^{\T}\right) + \sum_{s=p+1}^{t-1}X_s^{\T}V_{t-1}^{-1} X_s + \lambda \sum_{i=1}^{d} \mathbf{e}_i^{\T} V_{t-1}^{-1} \mathbf{e}_i \\
  = {} & \mathrm{Tr}\left(V_{t-1}^{-1} \sum_{s=t_0}^{p} X_s X_s^{\T}\right) + \mathrm{Tr}\left(V_{t-1}^{-1} \sum_{s=p+1}^{t-1} X_s X_s^{\T}\right) + \mathrm{Tr}\left(V_{t-1}^{-1} \lambda \sum_{i=1}^{d} \mathbf{e}_i \mathbf{e}_i^{\T}\right) \\
  = {} & \mathrm{Tr}(I_d) = d.
\end{align*} 
Hence, we complete the proof.
\end{proof}

\begin{proof}[{Proof of Lemma~\ref{lemma:path-length-B_t}}]
From the self-normalized concentration inequality~\citep[Theorem 1]{NIPS'11:AY-linear-bandits}, restated in Theorem~\ref{thm:self-normalize} of Section~\ref{appendix:tech-lemmas}, we know that
\begin{align*}
  \left\| \sum_{s=t_0}^{t-1} \eta_s X_s \right\|_{V_{t-1}^{-1}} \overset{\eqref{eq:potential}}{\leq} {} & \sqrt{2R^2 \log\left(\frac{\det(V_{t-1})^{1/2} \det(\lambda I)^{-1/2}}{\delta}\right)}\\
  \leq {} & R\sqrt{2\log\frac{1}{\delta} + d\log\left(1 + \frac{(t-t_0)L^2}{d}\right)},
\end{align*}
where the last inequality is obtained from the analysis of the determinant, as shown in the proof of Lemma~\ref{lemma:potential}. Meanwhile, since $V_{t-1} \succeq \lambda I$, we know that 
\begin{align*}
  \norm{\lambda \theta_t}_{V^{-1}_{t-1}}^2 \leq & 1/\lambda_{\min}(V_{t-1}) \norm{\lambda \theta_t}_2^2 \leq \frac{1}{\lambda} \norm{\lambda \theta_t}_2^2 \leq \lambda S^2.
\end{align*}
Therefore, the upper bound of $B_t$ can be immediately obtained by  combining the above inequalities.
\end{proof}

\subsection{Proof of Theorems~\ref{thm:dynamic-regret-in-epoch} and~\ref{thm:dynamic-regret}}

\begin{proof}[Proof of Theorem~\ref{thm:dynamic-regret-in-epoch}]
Due to Lemma~\ref{lemma:estimate-error} and the fact that $X_t^*, X_t \in \X$, each of the following holds with probability at least $1-\delta$, 
\begin{align*}
  \inner{X_t^*}{\theta_t} \leq {}& \inner{X_t^*}{\thetah_t} + L^2\sqrt{\frac{dH}{\lambda}}\sum_{p=t_0}^{t-1} \norm{\theta_p - \theta_{p+1}}_2 + \beta_t  \norm{X_t^*}_{V_{t-1}^{-1}},\\
  \inner{X_t}{\theta_t} \geq {}& \inner{X_t}{\thetah_t} - L^2\sqrt{\frac{dH}{\lambda}}\sum_{p=t_0}^{t-1} \norm{\theta_p - \theta_{p+1}}_2 - \beta_t  \norm{X_t}_{V_{t-1}^{-1}}.
\end{align*}
By the union bound, the following holds with probability at least $1-2\delta$,
\begin{align*}
  {} & \langle X_t^*,\theta_t\rangle - \langle X_t,\theta_t\rangle\\
  \leq {} & \inner{X_t^*}{\thetah_t} - \inner{X_t}{\thetah_t} + 2L^2\sqrt{\frac{dH}{\lambda}}\sum_{p=t_0}^{t-1} \norm{\theta_p - \theta_{p+1}}_2 + \beta_t (\norm{X_t^*}_{V_{t-1}^{-1}} + \norm{X_t}_{V_{t-1}^{-1}})\\
  \leq {} & 2L^2\sqrt{\frac{dH}{\lambda}}\sum_{p=t_0}^{t-1} \norm{\theta_p - \theta_{p+1}}_2 + 2 \beta_t \norm{X_t}_{V_{t-1}^{-1}},
\end{align*}
where the last step comes from the following implication of the arm selection criteria~\eqref{eq:select-criteria} such that $\inner{X_t^*}{\thetah_t} + \beta_t \norm{X_t^*}_{V_{t-1}^{-1}} \leq \inner{X_t}{\thetah_t} + \beta_t \norm{X_t}_{V_{t-1}^{-1}}$.

Hence, dynamic regret within epoch $\Ecal$ is bounded by,
\begin{align*}
 \textnormal{D-Regret}(\Ecal) \leq {} & \sum_{t\in\Ecal} 2L^2\sqrt{\frac{dH}{\lambda}}\sum_{p=t_0}^{t-1} \norm{\theta_p - \theta_{p+1}}_2 + 2 \beta_t \norm{X_t}_{V_{t-1}^{-1}}\\
 \leq {} & 2L^2\sqrt{\frac{d}{\lambda}}H^{\frac{3}{2}}\mathcal{P}(\Ecal) + 2\beta_H \sqrt{2dH\log\left(1+ \frac{L^2 H}{\lambda d}\right)},
\end{align*}
where the last inequality holds due to the standard elliptical potential lemma (Lemma~\ref{lemma:potential}), whose statement and proof are presented in Section~\ref{appendix:tech-lemmas}.
\end{proof}

\begin{proof}[Proof of Theorem~\ref{thm:dynamic-regret}] By taking the union bound over the dynamic regret of all $\ceil{T/H}$ epochs, we know that the following holds with probability at least $1-2/T$,
\begin{align*}
  \textnormal{D-Regret}_T =  \sum_{s=1}^{\ceil{T/H}}\textnormal{D-Regret}(\mathcal{E}_s) \leq 2L^2\sqrt{\frac{d}{\lambda}}H^{\frac{3}{2}} P_T + 2T\tilde{\beta}_H\sqrt{\frac{2d}{H}\log\left(1+ \frac{L^2 H}{\lambda d}\right)},
\end{align*}
where $\tilde{\beta}_H = \sqrt{\lambda} S + R\sqrt{2\log(T\ceil{\frac{T}{H}}) + d\log\left(1 + \frac{HL^2}{\lambda d} \right)}$. Ignoring logarithmic factors in time horizon $T$, we finally obtain that 
\[
  \textnormal{D-Regret}_T \leq \Ot\big(d^{\frac{1}{2}}H^{\frac{3}{2}}P_T + dT/\sqrt{H}\big).
\]
When $P_T < \sqrt{d}/T$ (which corresponds a small amount of non-stationarity), we simply set the restarting period as $T$ and achieve an $\Ot(d\sqrt{T})$ regret bound. Note that under such a configuration, our algorithm actually performs no restart and thereby recovers the standard LinUCB algorithm for the stationary stochastic linear bandits~\citep{NIPS'11:AY-linear-bandits}. Besides, when coming to the non-degenerated case of $P_T \geq \sqrt{d}/T$, we set the restarting period optimally as $H = \floor{d^{1/4}T^{1/2} P_T^{-1/2}}$ and attain an $\Ot(d^{\frac{7}{8}} T^{\frac{3}{4}} P_T^{\frac{1}{4}})$ dynamic regret bound. This ends the proof. 
\end{proof}

\subsection{Revisiting Existing Results}
\label{sec:revisit}
Previous studies show an $\Ot(T^{2/3}P_T^{1/3})$ dynamic regret for non-stationary linear bandits, however, the technical reasoning suffers from some gaps and makes the overall regret guarantee ungrounded. In the following, we first spot the flaws of their original proofs and then discuss the key component of our new analysis.

Indeed, the flaw appears in a key technical lemma for regret analysis of WindowUCB, the pioneering study of non-stationary linear bandits~\citep[Lemma 3]{arXiv'19:window-LB}. The flaw is unfortunately inherited by the later studies, including WeightUCB~\citep[Theorem 2]{NIPS'19:weighted-LB}, the early version of this paper~\citep[Lemma 3]{AISTATS'20:restart}, and perturbation based method~\citep[Theorem 7]{UAI'20:kim20a}. To be more concrete, Lemma 3 of~\citet{arXiv'19:window-LB} (also see Lemma 3 of~\citet{AISTATS'20:restart}) claims that for any $t \in [T]$, 
\begin{equation}
  \label{eq:claim}
  \left\| V_{t-1}^{-1}\bigg(\sum_{s=t_0}^{t-1} X_s X_s^\T (\theta_s - \theta_t)\bigg)\right\|_2 \leq \sum_{p=t_0}^{t-1} \norm{\theta_p - \theta_{p+1}}_2.
\end{equation}
Actually, the quantity $\| V_{t-1}^{-1}(\sum_{s=t_0}^{t-1} X_s X_s^\T (\theta_s - \theta_t))\|_2$ is of great importance for the regret analysis of non-stationary linear bandits algorithms, because it will be finally converted to the path-length of unknown regression parameters. Our Lemma~\ref{lemma:path-length-A_t} gives an upper bound of $L \sqrt{\frac{dH}{\lambda}}\sum_{p=t_0}^{t-1} \norm{\theta_p - \theta_{p+1}}_2$, which has a worse dependence in terms of dimension $d$ and restarting period $H$ compared to~\eqref{eq:claim}. However, we will demonstrate that the proof of the above claim~\eqref{eq:claim} suffers serious technical flaws, which makes the result ungrounded. We restate their proof~\citep[Appendix B]{arXiv'19:window-LB} as follows:
\begin{align}
    {} & \left\|V_{t-1}^{-1} \bigg(\sum_{s=t_0}^{t-1} X_s X_s^\T (\theta_s - \theta_t)\bigg)\right\|_2 \nonumber \\
  = {} & \left\|V_{t-1}^{-1} \bigg(\sum_{s=t_0}^{t-1} X_s X_s^\T \Big(\sum_{p=s}^{t-1} (\theta_p - \theta_{p+1})\Big)\bigg)\right\|_2 \nonumber \\
  = {} & \left\|V_{t-1}^{-1} \bigg(\sum_{p=t_0}^{t-1} \Big(\sum_{s=t_0}^{p} X_s X_s^\T (\theta_p - \theta_{p+1})\Big)\bigg)\right\|_2 \nonumber \\
  \leq {} & \sum_{p=t_0}^{t-1} \left\|V_{t-1}^{-1} \Big(\sum_{s=t_0}^{p} X_s X_s^\T\Big) (\theta_p - \theta_{p+1})\right\|_2 \nonumber \\
  \leq {} & \sum_{p=t_0}^{t-1} \sigma_{\max}\left(V_{t-1}^{-1} \Big(\sum_{s=t_0}^{p} X_s X_s^\T\Big)\right) \norm{\theta_p - \theta_{p+1}}_2 \nonumber\\
  \leq {} & \sum_{p=t_0}^{t-1} \norm{\theta_p - \theta_{p+1}}_2, \label{eq:step}
\end{align}
where $\sigma_{\max} (\cdot)$ is the largest singular value. The key is the last step~\eqref{eq:step} but its proof is questionable: they need to show the following results holds universally for all $p \in \{t_0,\ldots,t-1\}$,
\begin{equation}
  \label{eq:key-to-hold}
  \sigma_{\max}\left(V_{t-1}^{-1} \Big(\sum_{s=t_0}^{p} X_s X_s^\T\Big)\right) \leq 1.
\end{equation}
To this end, denoted by $A = \sum_{s=t_0}^{p} X_s X_s^\T$, the authors show that $V_{t-1}^{-1} A$ shares the same characteristics polynomial with $V_{t-1}^{-1/2} A V_{t-1}^{-1/2}$, namely, $\det(\eta I - V_{t-1}^{-1} A) = \det(\eta I - V_{t-1}^{-1/2} A V_{t-1}^{-1/2})$ holds for any $\eta$. Since $V_{t-1}^{-1/2} A V_{t-1}^{-1/2}$ is clearly symmetric positive semi-definite, they claim that 
\begin{equation}
    \label{eq:wrong-claim}
    \z^{\T} V_{t-1}^{-1}A \z \geq 0
\end{equation}
also holds for $\z \in \S(1) =\{\x \mid \norm{\x}_2 = 1\}$, which is crucial for their remaining proof. 
\begin{align}
& \sigma_{\max}\left(V_{t-1}^{-1}\bigg(\sum_{s=t_0}^{p} X_{s} X_{s}^\T\bigg)\right)=\sup_{\z \in \S(1)} \z^\T V_{t-1}^{-1}\left(\sum_{s=t_0}^{p} X_{s} X_{s}^\T\right) \z \label{eq:wrong-1}\\
\overset{\eqref{eq:wrong-claim}}{\leq} & \sup_{\z \in \S(1)}\left\{\z^\T V_{t-1}^{-1}\bigg(\sum_{s=t_0}^{p} X_{s} X_{s}^\T\bigg) \z+\z^\T V_{t-1}^{-1}\bigg(\sum_{s=p+1}^{t-1} X_{s} X_{s}^\T\bigg) \z+\lambda \z^\T V_{t-1}^{-1} \z\right\} \label{eq:wrong-2}\\
=& \sup_{\z \in \S(1)} \z^\T V_{t-1}^{-1} V_{t-1} \z=1. \nonumber
\end{align}
However, we identify that there are two issues in the above arguments. First, the step in~\eqref{eq:wrong-1} doubtful. For a matrix $M \in \R^{m \times n}$, we have $\norm{M}_2 = \sup_{\norm{\x}_2 = 1} \sup_{\norm{\y}_2 = 1} \abs{\y^{\T} M \x}$ (see for Lemma~\ref{lemma:matrix-2-norm}), while it is not warranted that $\norm{M}_2 = \sup_{\norm{\z}_2 = 1} \abs{\z^{\T} M \z}$ which is seemingly important for the following arguments. Regardless of this first issue, the second issue about the claim~\eqref{eq:wrong-claim} and the result in~\eqref{eq:wrong-2}. We discover that the claim~\eqref{eq:wrong-claim} is even more severe. We discover that the claim~\eqref{eq:wrong-claim} is ungrounded (at least its current proof cannot stand for the correctness). The big caveat is that $V_{t-1}^{-1}A \in \R^{d\times d}$ is \emph{not} guaranteed to be symmetric. The logic behind the claim is that, suppose $P,Q\in\R^{d\times d}$ are with the same characteristics polynomial, i.e., $\det(\eta I - Q) = \det(\eta I - P)$ holds for any $\eta$, and meanwhile $P$ is symmetric positive semi-definite (which guarantees $\z^\T P \z \geq 0$ for any $\z \in \R^d$), then we can also have $\z^\T Q \z \geq 0$ for any $\z \in \R^d$. Unfortunately, the reasoning is not correct, and we give a simple counterexample. Let $P$ be the $2$-dim identity matrix $[1,0;0,1]$, and $Q = [1,-10;0,1]$ is an \emph{asymmetric} matrix, then clearly $\det(\eta I - P)=\det(\eta I - Q) = (\eta - 1)^2$ is true for any $\eta$; however, $\z^\T Q \z \geq 0$ does not hold in general, for example, $\z^\T Q \z = -8 < 0$ when $\z = (1,1)^{\T}$.

\paragraph{Fixing the gap.} The term in the left hand of~\eqref{eq:claim} is crucial for the dynamic regret analysis of non-stationary linear bandits, which is expected to be converted to the path-length indicating the non-stationarity of environments. In the proof of~\citet{AISTATS'19:window-LB}, as shown in~\eqref{eq:step} and~\eqref{eq:key-to-hold}, the authors aim to upper bound the crucial quantity $\sigma_{\max}\left(V_{t-1}^{-1} \big(\sum_{s=t_0}^{p} X_s X_s^\T\big)\right)$ by some constant. However, the technical reasoning is wrong. We avoid this issue and provide a new analysis as exhibited in the proof of Lemma~\ref{lemma:path-length-A_t}, which serves as the key component in our fix. Following our analysis of Lemma~\ref{lemma:path-length-A_t}, it is not hard to give a similar estimator error analysis for the sliding window least square estimator~\citep{AISTATS'19:window-LB} and the weighted least square estimator~\citep{NIPS'19:weighted-LB}, which will fix their results from $\Ot(d^{2/3}T^{2/3}P_T^{1/3})$ to $\Ot(d^{7/8}T^{3/4}P_T^{1/4})$. Also see related discussions in Remark~\ref{remark:estimate-error}.
 
\paragraph{Impossibility result.} In the following, we further prove that the desirable claim in~\eqref{eq:key-to-hold} is actually impossible. Specifically, we construct a hard problem instance to show that the key quantity $\sigma_{\max}\left(V_{t-1}^{-1} \big(\sum_{s=t_0}^{p} X_s X_s^\T\big)\right)$ cannot be universally upper bounded by any constant without square-root dependence on $H$. For notational convenience, we focus on the first restarting epoch, so the starting index $t_0 = 1$. 
\begin{myThm}
\label{thm:impossibility}
Let $L=1$ and $\lambda = 1$. We construct the feature as
\begin{equation}
  \label{eq:example}
  \begin{split}
  & X_1=\ldots=X_p = \left[\frac{1}{\sqrt{p}}, \frac{\sqrt{p-1}}{\sqrt{p}}\right]^{\T}, \mbox{ and } \\
  & X_{p+1}=\ldots=X_H = \left[\frac{1}{\sqrt{H-p}}, \frac{\sqrt{H-p-1}}{\sqrt{H-p}}\right]^{\T}.
  \end{split}
\end{equation}

Denote by $A = \sum_{s=1}^{p} X_s X_s^\T$ and $B = \sum_{s=p+1}^{H} X_s X_s^\T$, then the covariance matrix is $V_{t-1} = A + B +  I_d$. Under such cases, considering the checkpoint of $p = \floor{H/3}$, we have
\begin{equation}
  \label{eq:impossible}
  \norm{V_{t-1}^{-1} A}_2 = \sigma_{\max} (V_{t-1}^{-1} A) \geq 0.0564\sqrt{H}.
\end{equation}
\end{myThm}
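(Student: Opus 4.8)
The plan is to exploit the block structure of the construction: within each block the feature is constant, which makes both $A$ and $B$ rank-one and collapses the singular value of interest to the Euclidean norm of a single explicit two-dimensional vector. Writing $\u := X_1 = \cdots = X_p$ and $\v := X_{p+1} = \cdots = X_H$ for the two features, a direct check gives $\norm{\u}_2 = \norm{\v}_2 = 1$, so that $A = p\,\u\u^\T$ and $B = (H-p)\,\v\v^\T$ are rank-one and
\[
  V_{t-1} = I_2 + A + B = \begin{pmatrix} 3 & c \\ c & H-1 \end{pmatrix}, \qquad c := \sqrt{p-1}+\sqrt{H-p-1}.
\]

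The key reduction is the following. Since $A = p\,\u\u^\T$ is rank-one, the matrix $V_{t-1}^{-1}A = p\,(V_{t-1}^{-1}\u)\,\u^\T$ is also rank-one, so its unique nonzero singular value factorizes as $\sigma_{\max}(V_{t-1}^{-1}A) = p\,\norm{V_{t-1}^{-1}\u}_2\norm{\u}_2 = p\,\norm{V_{t-1}^{-1}\u}_2$, using $\norm{\u}_2 = 1$. This is the crucial simplification: rather than the singular values of a product of matrices, it suffices to lower bound one explicit vector norm. Using the closed form of the $2\times2$ inverse with $\Delta := \det V_{t-1} = 3(H-1)-c^2 > 0$ (positivity since $V_{t-1}\succ0$), I would compute
\[
  V_{t-1}^{-1}\u = \frac{1}{\Delta\sqrt{p}}\begin{pmatrix} (H-1)-c\sqrt{p-1} \\ 3\sqrt{p-1}-c \end{pmatrix},
\]
and retain only the first coordinate to obtain $\sigma_{\max}(V_{t-1}^{-1}A) \ge \sqrt{p}\,a/\Delta$, where $a := (H-1)-c\sqrt{p-1} = (H-p)-\sqrt{(p-1)(H-p-1)}$.

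It then remains to bound the three scalars $\sqrt p$, $a$, and $\Delta$ at the checkpoint $p = \floor{H/3}$. For the denominator, expanding $c^2 = (p-1)+(H-p-1)+2\sqrt{(p-1)(H-p-1)} \ge H-2$ yields the single clean estimate $\Delta = 3(H-1)-c^2 \le 2H-1 \le 2H$, which is precisely what produces the stated constant. For the numerator, the elementary inequality $\sqrt{(p-1)(H-p-1)} \le \sqrt{p(H-p)}$ gives $a \ge \sqrt{H-p}\,(\sqrt{H-p}-\sqrt{p})$, and substituting $p \le H/3 \le H-p$ (valid for the floor) bounds this below by $(2-\sqrt2)H/3$. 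Assembling the pieces with $\sqrt p \approx \sqrt{H/3}$ gives
\[
  \sigma_{\max}(V_{t-1}^{-1}A) \ge \frac{\sqrt{H/3}\cdot (2-\sqrt2)H/3}{2H} = \frac{2-\sqrt2}{6\sqrt3}\,\sqrt{H} \ge 0.0564\,\sqrt{H},
\]
matching the claim, since $\tfrac{2-\sqrt2}{6\sqrt3} \approx 0.05637$.

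The main obstacle is purely the constant bookkeeping around the floor in $p = \floor{H/3}$: one must check that replacing $H/3$ by $\floor{H/3}$ in $\sqrt p$ and in the factorized bound for $a$ does not erode the constant below $0.0564$. This is harmless because the loose estimate $\Delta \le 2H$ discards nearly a factor of two relative to the true determinant $\Delta \sim 1.06\,H$; consequently the genuine value of $\sigma_{\max}(V_{t-1}^{-1}A)$ is $\approx 0.107\,\sqrt{H}$, leaving a comfortable buffer so that the $O(1)$ shift in $p$ induced by the floor (a relative perturbation of order $1/H$) cannot push the quantity below $0.0564\,\sqrt{H}$, with the few smallest values of $H$ verified by direct substitution. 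Everything else reduces to routine scalar algebra.
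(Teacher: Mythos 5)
Your proposal is correct and, at bottom, runs along the same lines as the paper's own proof: your scalar reduction $\sigma_{\max}(V_{t-1}^{-1}A) \geq \sqrt{p}\,a/\Delta$ is exactly the paper's bound $\sqrt{\bar{\lambda}_{\max}\cdot(1+y^2)/s^2} \geq \sqrt{p}\,\alpha/s$ (your $a$ equals their $\alpha = z^2 - yz + \lambda$, and your $\Delta$ equals their $s = \det V_{t-1}$), and your two estimates $a \geq (2-\sqrt{2})H/3$ and $\Delta \leq 2H$ coincide with theirs, producing the identical constant $\frac{2-\sqrt{2}}{6\sqrt{3}}$. The one genuine difference is how you reach that reduction: the paper multiplies out $V_{t-1}^{-1}A$, forms the Gram matrix $V_{t-1}^{-1}A(V_{t-1}^{-1}A)^{\T}$, and solves its characteristic equation to get $\bar{\lambda}_{\max} = \alpha^2 + \beta^2$ before discarding $\beta^2$, whereas you observe that $A = p\,\u\u^{\T}$ is rank one, hence $V_{t-1}^{-1}A = p\,(V_{t-1}^{-1}\u)\u^{\T}$ has the single singular value $p\norm{V_{t-1}^{-1}\u}_2$, and you discard the second coordinate of $V_{t-1}^{-1}\u$; this is cleaner (it explains why the paper's Gram matrix factors as it does and avoids the eigenvalue computation), though it buys nothing quantitatively. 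Two caveats, both of which you share with the paper rather than fall below it. First, $\frac{2-\sqrt{2}}{6\sqrt{3}} \approx 0.05637 < 0.0564$, so your final inequality overstates the constant in the fourth decimal place --- but the paper does the same when it rounds $\sqrt{a'/4}$ with $a' \approx 0.0127$ up to $0.0564$; what both chains of inequalities actually establish is a bound of roughly $0.0563\sqrt{H}$. Second, your treatment of the floor in $p = \floor{H/3}$ is informal (a buffer argument plus checking small $H$), but the paper simply assumes $p = aH$ is an integer; note also that your numerator step is legitimate because $(H-p) - \sqrt{p(H-p)}$ is decreasing in $p$ on $p \leq H/2$, so $p \leq H/3$ suffices.
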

\begin{proof}
For simplicity of notation, let $y = \sqrt{p-1}$ and $z = \sqrt{H-p-1}$. By the constructed example in~\eqref{eq:example}, we have
\begin{align*}
{} & A = \sum_{s=1}^{p} X_s X_s^\T = 
\begin{bmatrix}
  1 & y\\
  y & y^2
\end{bmatrix}
\mbox{ and } B = \sum_{s=p+1}^{H} X_s X_s^\T = 
\begin{bmatrix}
  1 & z\\
  z & z^2
\end{bmatrix}
.
\end{align*}

For convenience, we will write the covariance matrix $V_t$ simply $V$ when no confusion can arise. So the concerned matrix $V^{-1}A$ can be calculated as 
\begin{align*}
V^{-1}A = {} & 
\begin{bmatrix}
  2+\lambda & y+z\\
  y+z & y^2 + z^2 + \lambda
\end{bmatrix}^{-1}
\begin{bmatrix}
  1 & y\\
  y & y^2
\end{bmatrix}\\
= {} & \frac{1}{(2+\lambda)(y^2 + z^2 + \lambda) - (y+z)^2} \cdot  
\begin{bmatrix}
  y^2 + z^2 + \lambda & -(y+z)\\
  -(y+z) & 2+\lambda
\end{bmatrix}
\begin{bmatrix}
  1 & y\\
  y & y^2
\end{bmatrix}\\
= {} & \frac{1}{(1+\lambda)(y^2 + z^2) - 2yz + (2+\lambda)\lambda} \cdot  
\begin{bmatrix}
  z^2 - yz + \lambda & yz^2 - y^2z + \lambda y\\
  (1+\lambda)y-z & (1+\lambda)y^2-yz
\end{bmatrix}.
\end{align*}
Denote by $s= (1+\lambda)(y^2 + z^2) - 2yz + (2+\lambda)\lambda$, $\alpha = z^2 - yz + \lambda$, and $\beta = (1+\lambda)y-z$, we then have 
\begin{align*}
V^{-1}A (V^{-1}A)^\T = \frac{1+y^2}{s^2}
\begin{bmatrix}
  \alpha^2 & \alpha \beta\\
  \alpha \beta & \beta^2
\end{bmatrix}.
\end{align*}
The eigenvalues (we denote them by $\bar{\lambda}$, to distinguish the notation with the regularizer coefficient $\lambda$) of matrix $[\alpha^2, \alpha \beta; \alpha \beta, \beta^2]$ should satisfy $(\alpha^2 - \bar{\lambda})(\beta^2 - \bar{\lambda}) - \alpha^2 \beta^2 = 0$. By solving the equation, we can obtain that
\begin{align*}
   \bar{\lambda}_{\max} = \alpha^2 + \beta^2 = (z^2 - yz + \lambda)^2 + ((1+\lambda)y-z)^2 \geq (z^2 - yz + \lambda)^2.
\end{align*}
When $\lambda=1$ and $p = aH$ (here we assume $aH$ is an integer for simplicity), we have 
\begin{align}
  \bar{\lambda}_{\max} \geq {} & (z^2 - yz + \lambda)^2\nonumber \\
  = {} & \left( (1-a)p-1 - \sqrt{p-1}\sqrt{(1-a)p-1} + 1 \right)^2 \nonumber \\
  \geq {} & \left((1-a)H - \sqrt{a(1-a)}H\right)^2 \nonumber \\
  = {} & (1-a)(\sqrt{1-a} - \sqrt{a})^2 H^2 \label{eq:lb-1}.
\end{align}
Note that the condition of $a\in(0,1/2)$ is required to make the second inequality hold. On the other hand, we have
\begin{equation}
  \label{eq:lb-2}
  \frac{1+y^2}{s^2} = \frac{p}{(2(y^2 + z^2) - 2yz + 3)^2} \geq \frac{p}{(2(y^2 + z^2) +4)^2} = \frac{a}{4H}.
\end{equation}
Combining~\eqref{eq:lb-1} and~\eqref{eq:lb-2}, we have
\begin{align*}
  \sigma_{\max}(V^{-1}A) = \sqrt{\lambda_{\max} \left(V^{-1}A (V^{-1}A)^\T\right)} \geq \sqrt{\bar{\lambda}_{\max} \cdot \frac{1+y^2}{s^2}} \geq \sqrt{\frac{a'}{4}} \cdot \sqrt{H},
\end{align*}
where $a' = (1-a)a (\sqrt{1-a} - \sqrt{a})^2$ is a universal constant. When choosing $a = 1/3$ as selected in the main paper, $a' = 0.0127$ and the lower bound is $\sigma_{\max}(V^{-1}A) \geq 0.0564 \sqrt{H}$.
\end{proof}

Moreover, we report some numerical results for validation: when $H = 3000$, $\sigma_{\max} (V_{t-1}^{-1} A) = 5.852$ and the theoretical lower bound is $0.0564\sqrt{H} = 3.087$; when $H = 30000$, $\sigma_{\max} (V_{t-1}^{-1} A) = 18.474$ and the theoretical lower bound is $0.0564\sqrt{H} = 9.763$.

\subsection{Proof of Theorem~\ref{thm:dynamic-regret-BOB}}
\begin{proof}
We begin with the following decomposition of the  dynamic regret.
\begin{align*}
  \sum_{t=1}^T \langle X_t^*,\theta_t\rangle - \langle X_t,\theta_t\rangle = {} & \underbrace{\sum_{t=1}^T \langle X_t^*,\theta_t\rangle - \sum_{i=1}^{\ceil{T/\Delta}} \sum_{t = (i-1)\Delta + 1}^{i \Delta } \langle X_t(H^{\dagger}),\theta_t\rangle}_{\base}\\
  \quad {} & + \underbrace{\sum_{i=1}^{\ceil{T/\Delta}} \sum_{t = (i-1)\Delta + 1}^{i \Delta } \langle X_t(H^{\dagger}),\theta_t\rangle - \langle X_t(H_i),\theta_t\rangle}_{\meta},
\end{align*}
where $H^{\dagger}$ is the best restarting period to approximate the optimal restarting period $H^*$ in the pool $\H$, and $H^* = \floor{(dT/(P_T)^{2/3}}$. The first  term is the dynamic regret of RestartUCB with the best  restarting period in the candidate pool $\H$, and hence called base-regret. The second term is the regret overhead of meta-algorithm due to adaptive exploration of unknown optimal restarting period, and is thus called the meta-regret. We bound the two terms respectively.

We first consider the base-regret. Indeed, from the construction of candidate restarting periods pool $\H$, we confirm that there exists an restarting period $H^{\dagger} \in \H$ such that $H^{\dagger} \leq H^* \leq 2H^{\dagger}$. Therefore, employing the dynamic regret bound~\eqref{eq:main-result} in Theorem~\ref{thm:dynamic-regret}, we have the following upper bound for the base-regret:
\begin{align}
  \base \leq {} & \sum_{i=1}^{\ceil{T/\Delta}} \Ot\left(d^{\frac{1}{2}} H^{\dagger \frac{3}{2}} P_i + \frac{d\Delta}{\sqrt{H^\dagger}}\right) \label{eq:termb-step1} \\
  = {} & \Ot\left(d^{\frac{1}{2}} H^{\dagger \frac{3}{2}} P_T + \frac{dT}{\sqrt{H^\dagger}}\right) \label{eq:termb-step2} \\
  \leq {} & \Ot\left(d^{\frac{1}{2}} H^{* \frac{3}{2}} P_T + \frac{dT}{\sqrt{2H^*}}\right) \label{eq:termb-step3}\\
  = {} & \Ot \big( d^{\frac{7}{8}} T^{\frac{3}{4}} P_T^{\frac{1}{4}} \big), \label{eq:base-regret-bound}
\end{align}
where~\eqref{eq:termb-step1} is due to Theorem~\ref{thm:dynamic-regret} and $P_i$ denotes the path-length in the $i$-th episode of the meta-learner's update.~\eqref{eq:termb-step2} follows by summing over all update episodes, and the inequality~\eqref{eq:termb-step3} holds since the optimal restarting period $H^*$ is provably in the range of $[H_{\min}, H_{\max}]$ and satisfies $H^{\dagger} \leq H^* \leq 2H^{\dagger}$.

Next, we give an upper bound for the meta-regret. The analysis follows the proof argument in the sliding window based approach~\citep[Proposition~1]{arXiv'19:window-LB}. Note that the definition of the meta-regret is defined over the \emph{expected} reward (namely, $\E[r_t(X)] = X^\T \theta_t$), whereas the actual returned feedback is the noisy one (i.e., $r_t(X) = X^{\T} \theta_t + \eta_t$) which might be unbounded due to the additive sub-Gaussian noise. Fortunately, the light-tail property enables us to continue the use of adversarial MAB algorithms, e.g., Exp3~\citep{SICOMP'02:Auer-EXP3}. Specifically, by the concentration inequality endowed by the sub-Gaussian noise, we know that the received reward lies in the bounded region with high probability, which is presented in Lemma~\ref{lemma:bob}. Denote by $\mathcal{E}$ the event that Lemma~\ref{lemma:bob} holds, and denote by $R_i \triangleq \sum_{t = (i-1)\Delta + 1}^{i \Delta} \langle X_t(H^{\dagger}),\theta_t\rangle - \langle X_t(H_i),\theta_t\rangle$ the instantaneous regret of the meta learner. The meta-regret follows
\begin{align}
  \meta = {} & \E\left[\sum_{i=1}^{\ceil{T/\Delta}}  R_i\right] \nonumber \\
    = {} & \E\left[\sum_{i=1}^{\ceil{T/\Delta}} R_i~\Big\vert~\mathcal{E}\right] \cdot \Pr[\Ecal] + \E\left[\sum_{i=1}^{\ceil{T/\Delta}} R_i~\Big\vert~\overline{\Ecal}\right] \cdot \Pr[\overline{\Ecal}]\nonumber \\
    \leq {} & \O\sbr{L_{\max}\sqrt{\frac{T}{\Delta}N}} \cdot \sbr{1-\frac{2}{T}} + \O(T) \cdot \frac{2}{T}\nonumber \\
    = {} & \O(\Delta N T) \le \Ot(d^{1/2}T^{3/4}), \label{eq:meta-regret-bound}
\end{align}
where $L_{\max} \triangleq \max L_i$ for $i\in [\ceil{T/\Delta}]$ denotes the maximum cumulative loss in all episodes. The first equation is by definition, and the second one is by the law of total expectation. The next inequality follows from the following two aspects: the quantity is bounded according to the standard regret guarantee of Exp3~\citep{SICOMP'02:Auer-EXP3} when the event $\Ecal$ holds; and it is trivially upper bounded when the event $\Ecal$ does not happen. The failing probability is controlled by Lemma~\ref{lemma:bob}. The final equation is true by checking the parameters that the episode length is $\Delta = \ceil{d\sqrt{T}}$, and the number of candidate restarting periods $N$ is of order $\O(\log T)$, hence be omitted in the $\Ot(\cdot)$-notation.

Combining the upper bounds of base-regret~\eqref{eq:base-regret-bound} and meta-regret~\eqref{eq:meta-regret-bound}, we obtain that the expected dynamic regret of RestartUCB-BOB is bounded by $\Ot( d^{\frac{7}{8}} T^{\frac{3}{4}} P_T^{\frac{1}{4}})$, which completes the proof of Theorem~\ref{thm:dynamic-regret-BOB}. 

It is also worthy noting that the base-regret is actually with high-probability guarantees, while the meta-regret in our analysis only holds in expectation. Actually, we can boost the result to the high-probability version by employing advanced meta-algorithms such as Exp3.IX~\citep{NIPS'15:Neu} that can achieve a high-probability regret bound for adversarial MAB problems, as well as using the union bound in the analysis.
\end{proof}

\section{Experiments}
\label{sec:experiment}
Despite the focus of this paper is on the theoretical aspect, we present empirical studies to further evaluate the proposed approach.

\paragraph{Contenders.} We study two kinds of non-stationary environments: the underlying parameter is \emph{abruptly changing} or \emph{gradually changing}. We will simulate both environments and details can be found in the next paragraph. We compare RestartUCB to (a) WindowUCB, based on the sliding window least square~\citep{AISTATS'19:window-LB}; (b) WeightUCB, based on the weighted least square~\citep{NIPS'19:weighted-LB}; (c) StaticUCB, the algorithm designed for stationary linear bandits~\citep{NIPS'11:AY-linear-bandits}. In the scenario of abrupt change, we additionally compare with OracleRestartUCB, which knows the exact information of change points a priori and restarts the algorithm when reaching a change point. Evidently, OracleRestartUCB is not a practical algorithm, which actually serves as the skyline of all the approaches.

\paragraph{Settings.} In abruptly-changing environments, the unknown regression parameter $\theta_t$ is periodically set as $[1,0]$, $[-1,0]$, $[0,1]$, $[0,-1]$ in the first half of iterations, and $[1,0]$ for the remaining iterations. In gradually-changing environments, the unknown regression parameter  $\theta_t$ is moved from $[1,0]$ to $[-1,0]$ on the unit circle continuously. In both scenarios, we set $T=50,000$ and number of arms $n=20$. The feature is sampled from normal distribution $\mathcal{N}(0,1)$ and rescaled such that $L=1$. The random noise is generated according to $\mathcal{N}(0,0.1)$. Since the path-length $P_T$ is available in the synthetic datasets, we set the weight $\gamma = 1-1/\tau$ for WeightUCB, the window size $w = \tau$ for WindowUCB, and the restarting period $H = \tau$ for RestartUCB, here $\tau = 10*\floor{d^{1/4}T^{1/2} P_T^{-1/2}}$ is set as suggested by the theory. The simulation is repeated for $50$ times, and we report the average and standard deviation. 

\paragraph{Results.} Figure~\ref{figure:change} shows performance comparisons of different approaches for non-stationary linear bandits. The performance is measured by the (pseudo-) dynamic regret, which is plotted the in y-axis in the logarithmic scale. In the \emph{abruptly-changing environments}, OracleRestartUCB is definitely the best one as was expected since it knows exact information of change points a priori, and StaticUCB ranks the last as it does not take the non-stationarity issue into consideration. RestartUCB and WindowUCB have comparable performance, better than WeightUCB. Actually, RestartUCB is even slightly better than WindowUCB. We note that RestartUCB has an additional advantage over WindowUCB in terms of the computational issue: RestartUCB supports the one-pass update without storing historical data, whereas WindowUCB has to maintain a buffer and thus needs to scan data multiple times owing to the sliding window strategy. In the \emph{gradually-changing environments}, WeightUCB ranks the first, followed by WindowUCB and RestartUCB. Nevertheless, as will be shown later, WeightUCB takes a significantly longer running time than our approach.

Figure~\ref{figure:time} reports the running time including both mean and standard deviation. We can see that the time costs of RestartUCB, WindowUCB and StaticUCB are almost the same. By contrast, WeightUCB requires a significantly longer running time, nearly twice the cost of other contenders. The reason lies in the fact that WeightUCB algorithm involves the computation of inverse of covariance matrix $V_t \in \R^{d\times d}$ and its variant $\tilde{V}_t \in \R^{d\times d}$, while other three methods maintain and manipulate only one covariance matrix. It is worthy to note that our approach can be further accelerated by the recursive least square. This will save the inverse computation of the covariance matrix, which will be particularly desired in high-dimensional problems.

\begin{figure}[!t]
    \centering
    \subfloat[abrupt change]{ \label{figure:abrupt} 
        \includegraphics[clip, trim=3.5cm 9.2cm 4.3cm 9.9cm,height=0.35\textwidth]{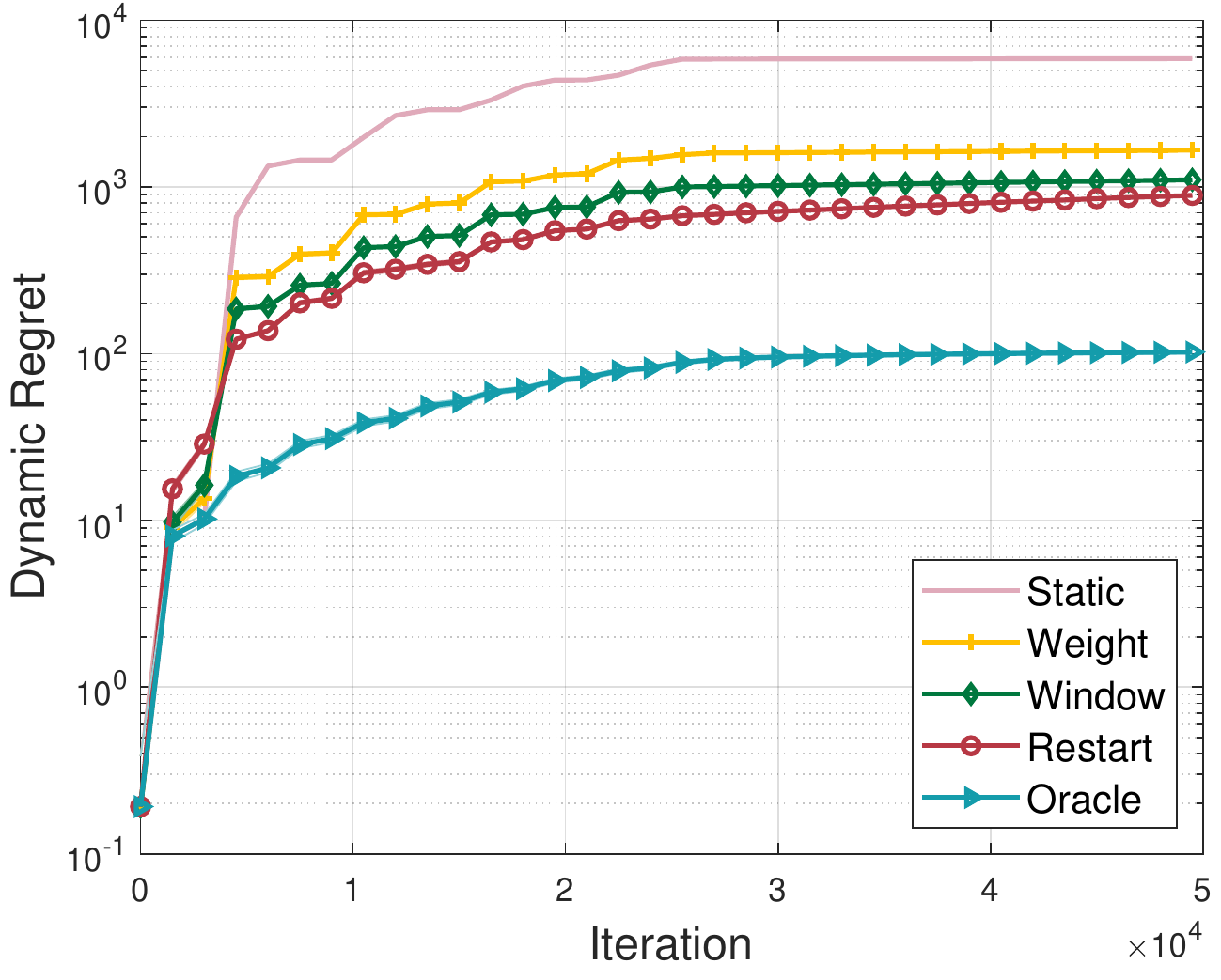}}\hspace{3mm}
    \subfloat[gradual change]{ \label{figure:slow}
        \includegraphics[clip, trim=3.5cm 9.2cm 2.4cm 9.9cm,height=0.35\textwidth]{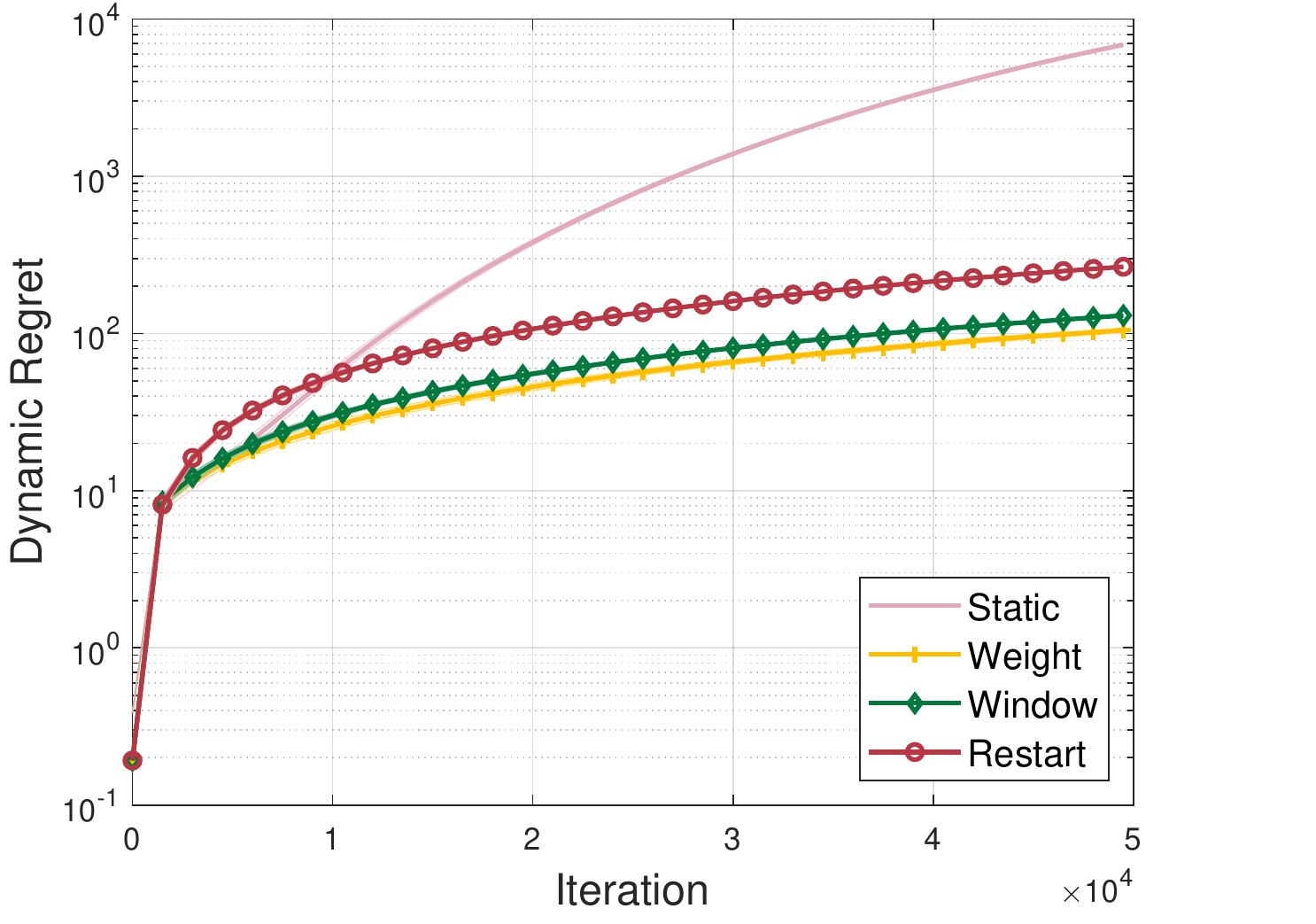}} 
    \caption{Comparisons of different approaches in terms of dynamic regret. Note that the y-axis is plotted in the logarithmic scale.}
    \label{figure:change}
\end{figure}

\begin{figure}[!t]
    \centering
    \includegraphics[clip, trim=3cm 2cm 4.5cm 2cm,width=0.65\textwidth]{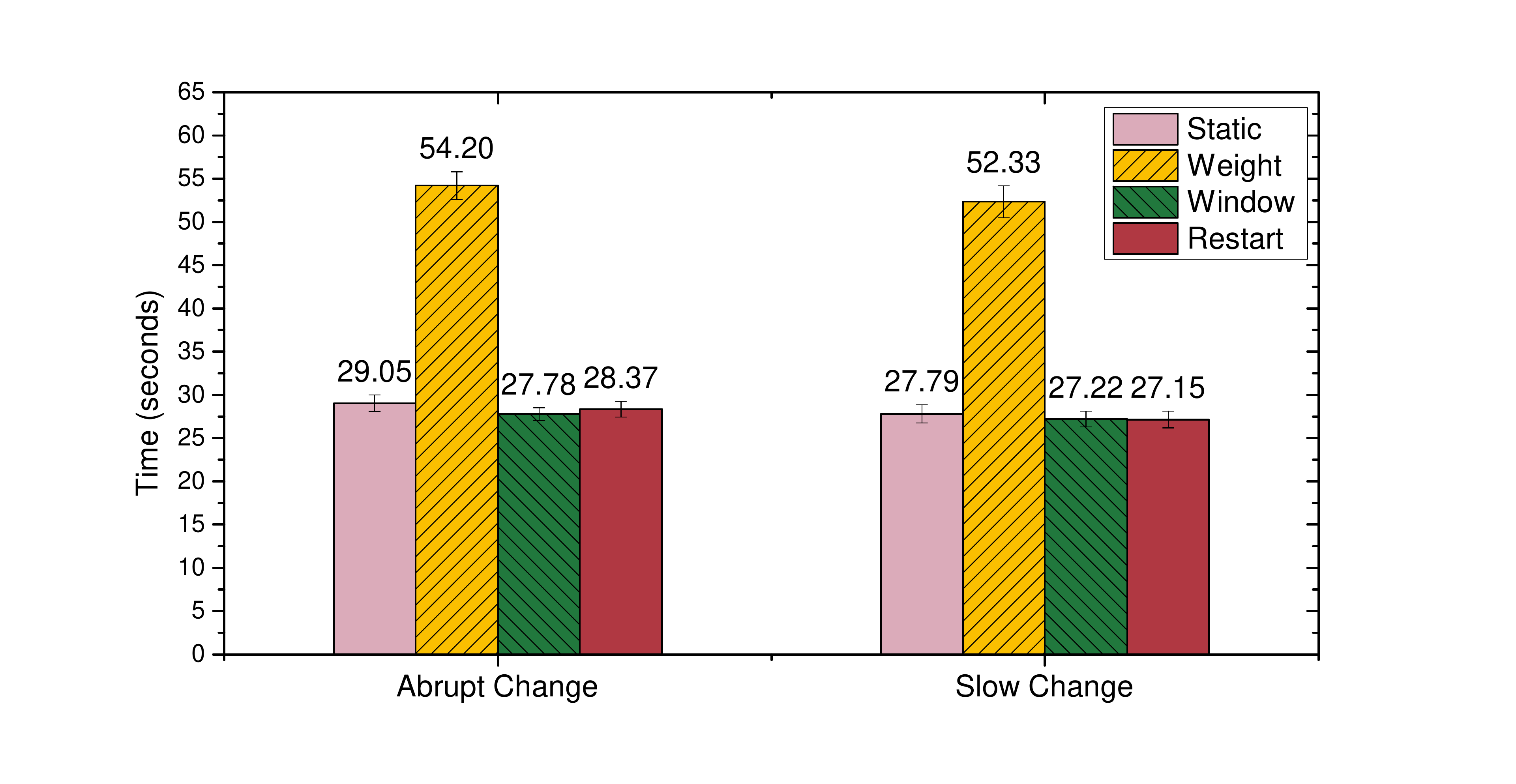}
    \caption{Comparisons of different approaches in terms of running time.}
    \label{figure:time}
\end{figure}

\section{Conclusion}
\label{sec:conclusion}
In this paper, we study the problem of non-stationary linear bandits, where the unknown regression parameter $\theta_t$ is changing over time. We propose a simple algorithm based on the restarted strategy, which enjoys strong theoretical guarantees notwithstanding its simplicity. Concretely, when the path-length of underlying parameters $P_T$ is known, our proposed RestartUCB algorithm enjoys an $\Ot(d^{7/8}T^{3/4}P_T^{1/4})$ dynamic regret, which shares the same regret guarantees with previous methods developed in the literature yet is with more favorable computational advantage. In addition, we show that the same dynamic regret guarantee is attainable even when $P_T$ is unknown by further using ResatartUCB as the base algorithm and combining the bandits-over-bandits mechanism as the meta scheduling. Empirical studies validate the efficacy of the proposed approach, particularly in the abruptly-changing environments.

The current upper bounds do not match the existing lower bound, even when the path-length term is known. In the future, we would like to investigate how to get rid of this regret gap and further study how to design algorithms for non-stationary linear bandits that achieve rate-optimal dynamic regret without prior information.

\section*{Acknowledgment}
This research was supported by the National Science Foundation of China (61921006, 61673201), and the Collaborative Innovation Center of Novel Software Technology and Industrialization. We are grateful for the anonymous reviewers for their helpful comments. The preliminary version of this paper appears at AISTATS 2020, in which we use the wrong argument~\eqref{eq:claim} as spotted in Section~\ref{sec:revisit}. This version has fixed the technical error. For the correction, the authors thank Jin-Hui Wu for many helpful discussions, especially on the impossibility result of Theorem~\ref{thm:impossibility}. We also acknowledge Yu-Hu Yan for carefully proofreading the paper.

\bibliography{online_learning}
\bibliographystyle{plainnat}

\appendix
\section{Technical Lemmas}
\label{appendix:tech-lemmas}
In this section, we provide several technical lemmas that frequently used in the proofs. 

\begin{myThm}[Self-Normalized Bound for Vector-Valued Martingales~{\citep[Theorem 1]{NIPS'11:AY-linear-bandits}}]
\label{thm:self-normalize}
Let $\{F_t\}_{t=0}^\infty$ be a filtration. Let $\{\eta_t\}_{t=0}^\infty$ be a real-valued stochastic process such that $\eta_t$ is $F_t$-measurable and conditionally $R$-sub-Gaussian for some $R>0$, namely,
\begin{equation}
  \label{eq:sub-Gaussian}
  \forall \lambda \in \mathbb{R}, \quad \mathbb{E}[\exp(\lambda \eta_t) \mid F_{t-1}] \leq \exp\left(\frac{\lambda^2 R^2}{2}\right).
\end{equation}

Let $\{X_t\}_{t=1}^\infty$ be an $\mathbb{R}^d$-valued stochastic process such that $X_t$ is $F_{t-1}$-measurable. Assume that $V$ is a $d\times d$ positive definite matrix. For any $t\geq 0$, define
\begin{equation}
  \label{eq:covariance-matrix}
  \bar{V}_t = V + \sum_{\tau = 1}^t X_\tau X_\tau^{\T},\quad S_t = \sum_{\tau = 1}^t \eta_\tau X_\tau.
\end{equation}

Then, for any $\delta>0$, with probability at least $1-\delta$, for all $t\geq 0$, 
\begin{equation}
  \label{eq:self-normal-concentration}
  \norm{S_t}_{\bar{V_t}^{-1}}^2 \leq 2R^2 \log \left(\frac{\det(\bar{V_t})^{1/2} \det(V)^{-1/2}}{\delta}\right).
\end{equation}
\end{myThm}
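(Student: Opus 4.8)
The plan is to prove this tail bound via the \emph{method of mixtures} (pseudo-maximization), the standard route to self-normalized martingale inequalities. The idea is to construct, for each fixed direction $\lambda \in \R^d$, a nonnegative supermartingale built out of the sub-Gaussian increments, then to integrate these supermartingales against a Gaussian prior on $\lambda$ so as to obtain a \emph{single} mixture supermartingale whose value encodes exactly the self-normalized statistic $\norm{S_t}_{\bar{V}_t^{-1}}^2$, and finally to apply a maximal inequality for nonnegative supermartingales.

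First I would fix $\lambda \in \R^d$ and define
\[
  M_t^{\lambda} = \exp\left(\sum_{\tau=1}^{t}\left[\frac{\eta_\tau \inner{\lambda}{X_\tau}}{R} - \frac{1}{2}\inner{\lambda}{X_\tau}^2\right]\right).
\]
Since $X_\tau$ is $F_{\tau-1}$-measurable, the factor $\inner{\lambda}{X_\tau}$ can be pulled outside the conditional expectation, so applying the sub-Gaussian condition~\eqref{eq:sub-Gaussian} with the scalar $\inner{\lambda}{X_\tau}/R$ shows that the per-step increment has conditional mean at most $1$; hence $(M_t^\lambda)_t$ is a nonnegative supermartingale with $M_0^\lambda = 1$. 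Next I would mix over $\lambda$: letting $h$ be the density of $\mathcal{N}(0, V^{-1})$, set $M_t = \int_{\R^d} M_t^\lambda\, h(\lambda)\diff\lambda$, which by Tonelli is again a nonnegative supermartingale with $\E[M_0] = 1$. The crucial computation is that this Gaussian integral is explicit: completing the square in $\lambda$ (the total quadratic form being $\tfrac{1}{2}\lambda$-weighted by $V + \sum_\tau X_\tau X_\tau^\T = \bar{V}_t$) yields
\[
  M_t = \left(\frac{\det V}{\det \bar{V}_t}\right)^{1/2}\exp\left(\frac{1}{2R^2}\norm{S_t}_{\bar{V}_t^{-1}}^2\right),
\]
so the desired statistic appears directly in the exponent.

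Finally I would invoke a maximal inequality (Ville's inequality) for the nonnegative supermartingale $M_t$: $\Pr\!\left[\sup_{t\ge 0} M_t \ge 1/\delta\right] \le \delta\,\E[M_0] = \delta$. On the complementary event, of probability at least $1-\delta$, one has $M_t < 1/\delta$ simultaneously for all $t \ge 0$; substituting the closed form, taking logarithms, and rearranging gives exactly
\[
  \norm{S_t}_{\bar{V}_t^{-1}}^2 \le 2R^2 \log\!\left(\frac{\det(\bar{V}_t)^{1/2}\det(V)^{-1/2}}{\delta}\right),
\]
which is the claimed bound.

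I expect the main obstacle to be the \emph{uniform-in-$t$} nature of the guarantee: a naive per-$t$ Chernoff/Markov argument only controls each fixed horizon, whereas the statement demands a bound holding for all $t \ge 0$ at once. This is precisely what the mixture-supermartingale construction buys, but it must be justified carefully. Concretely, to apply the maximal inequality one should pass through a stopped process $M_{t\wedge\sigma}$ for an arbitrary stopping time $\sigma$ and verify $\E[M_\sigma] \le 1$ even when $\sigma$ may be infinite, using the martingale convergence theorem to define $M_\infty = \lim_t M_t$ (the limit exists since $M_t \ge 0$). The secondary technical points are the interchange of integration and conditional expectation (Tonelli, relying on nonnegativity) and the exact Gaussian integral, both routine but requiring care to land the determinant ratio correctly.
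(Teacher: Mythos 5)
Your proposal is correct, but note that the paper itself contains no proof of this statement: Theorem~\ref{thm:self-normalize} is imported verbatim from \citet[Theorem 1]{NIPS'11:AY-linear-bandits} and restated in the appendix only for self-containedness, so the natural benchmark is the original proof in that source. Your argument is essentially that proof: the fixed-direction supermartingale $M_t^\lambda$ obtained from the sub-Gaussian condition applied with scalar $\inner{\lambda}{X_\tau}/R$, the Gaussian mixture with covariance $V^{-1}$ whose explicit integral yields $\left(\det V/\det \bar{V}_t\right)^{1/2}\exp\bigl(\tfrac{1}{2R^2}\norm{S_t}_{\bar{V}_t^{-1}}^2\bigr)$, and the stopping-time/Ville maximal inequality to make the bound hold uniformly over $t\geq 0$ — including the correct identification of the stopped-process step as the point needing care — so your reconstruction is faithful and complete.
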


\begin{myLemma}[Elliptical Potential Lemma]
    \label{lemma:potential}
    Suppose $U_0 = \lambda I$, $U_t = U_{t-1} + X_tX_t^{\T}$, and $\norm{X_t}_2 \leq L$, then
    \begin{equation}
      \label{eq:potential}
      \sum_{t=1}^T \lVert U_{t-1}^{-\frac{1}{2}} X_t\rVert_2 \leq \sqrt{2dT \log\left(1 + \frac{L^2T}{\lambda d}\right)}.
    \end{equation}
\end{myLemma}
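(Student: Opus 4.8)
The plan is to reduce the $\ell_1$-type sum on the left to an $\ell_2$-type sum by Cauchy--Schwarz, and then to control that $\ell_2$ quantity by a log-determinant potential in the classical elliptical-potential style. First I would apply Cauchy--Schwarz across the $T$ summands,
\[
\sum_{t=1}^{T}\norm{U_{t-1}^{-1/2}X_t}_2 \leq \sqrt{T}\,\sqrt{\sum_{t=1}^{T}\norm{U_{t-1}^{-1/2}X_t}_2^2},
\]
so that it suffices to establish $\sum_{t=1}^{T}\norm{U_{t-1}^{-1/2}X_t}_2^2 \leq 2d\log\!\left(1 + L^2T/(\lambda d)\right)$; taking the square root and multiplying by $\sqrt{T}$ then reproduces the claimed bound exactly.

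The heart of the argument is a telescoping identity for determinants. Writing $u_t \triangleq \norm{U_{t-1}^{-1/2}X_t}_2^2 = X_t^\T U_{t-1}^{-1}X_t$, the matrix determinant lemma applied to the rank-one update $U_t = U_{t-1} + X_tX_t^\T$ gives $\det(U_t) = \det(U_{t-1})(1 + u_t)$, hence $1 + u_t = \det(U_t)/\det(U_{t-1})$. Because $U_{t-1}\succeq\lambda I$ and $\norm{X_t}_2\leq L$, each term obeys $u_t \leq L^2/\lambda \leq 1$ under the mild normalization $\lambda \geq L^2$, and on $[0,1]$ the elementary inequality $u \leq 2\log(1+u)$ holds. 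Summing and telescoping,
\[
\sum_{t=1}^{T} u_t \leq 2\sum_{t=1}^{T}\log\frac{\det(U_t)}{\det(U_{t-1})} = 2\log\frac{\det(U_T)}{\det(U_0)}.
\]

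It then remains to bound the final determinant ratio. Since $U_0 = \lambda I$ we have $\det(U_0) = \lambda^d$, while the trace identity $\mathrm{Tr}(U_T) = \lambda d + \sum_{t=1}^{T}\norm{X_t}_2^2 \leq \lambda d + TL^2$, combined with the AM--GM inequality on the eigenvalues of the positive definite matrix $U_T$, yields $\det(U_T) \leq (\mathrm{Tr}(U_T)/d)^d \leq ((\lambda d + TL^2)/d)^d$. Thus $\log(\det(U_T)/\det(U_0)) \leq d\log(1 + L^2T/(\lambda d))$, which gives $\sum_{t=1}^{T} u_t \leq 2d\log(1 + L^2T/(\lambda d))$ and closes the chain initiated by the Cauchy--Schwarz step.

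The only delicate point is the per-step linearization $u_t \leq 2\log(1+u_t)$, which is valid only for $u_t \leq 1$ and hence presupposes the normalization $\lambda \geq L^2$; absent such a convention one would instead carry a truncation $\min(1,u_t)$ through the telescoping or rescale the features, changing only the universal constant. Everything else is routine: the determinant lemma, the trace/AM--GM determinant estimate, and the Cauchy--Schwarz reduction are all standard. I therefore anticipate no genuine obstacle, only the need to state clearly the regime $\lambda \geq L^2$ (which holds in the paper's normalized setting) under which the bound is exactly as written.
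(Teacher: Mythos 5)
Your proposal is correct and follows essentially the same route as the paper's proof: a rank-one determinant identity, the elementary linearization (your $u \leq 2\log(1+u)$ is exactly the paper's $1+x \geq \exp(x/2)$ on $[0,1]$), telescoping the log-determinants, a trace/AM--GM bound giving $\det(U_T) \leq (\lambda + L^2T/d)^d$, and Cauchy--Schwarz to pass between the $\ell_1$ and $\ell_2$ sums. The caveat you flag --- that the linearization needs $\norm{U_{t-1}^{-1/2}X_t}_2^2 \leq 1$, i.e.\ $\lambda \geq L^2$ --- applies verbatim to the paper's own proof, which invokes the same inequality on $[0,1]$ without checking it; you are, if anything, more careful on this point.
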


\begin{proof}
    First, we have the following decomposition,
    \[
        U_t = U_{t-1} + X_tX_t^{\T} = U_{t-1}^{\frac{1}{2}}(I + U_{t-1}^{-\frac{1}{2}}X_t X_t^{\T}U_{t-1}^{-\frac{1}{2}})U_{t-1}^{\frac{1}{2}}.
    \]

    Taking the determinant on both sides, we get
    \[
    \det(U_t) = \det(U_{t-1}) \det(I + U_{t-1}^{-\frac{1}{2}}X_t X_t^{\T}U_{t-1}^{-\frac{1}{2}}),
    \]
    which in conjunction with Lemma~\ref{lemma:determinant} yields
    \[
       \det(U_t) = \det(U_{t-1}) (1 + \lVert U_{t-1}^{-\frac{1}{2}} X_t\rVert_2^2) \geq \det(U_{t-1}) \exp(\lVert U_{t-1}^{-\frac{1}{2}} X_t\rVert_2^2/2).
  \]
    Note that in the first inequality, we utilize the fact that $1 + x \geq \exp(x/2)$ holds for any $x\in [0,1]$. By taking advantage of the telescope structure, we have
    \[
      \begin{split}
             & \sum_{t=1}^T \lVert U_{t-1}^{-\frac{1}{2}} X_t\rVert_2^2 \leq 2\log \frac{\det(U_T)}{\det(U_0)} \leq 2d\log \left(1 + \frac{L^2T}{\lambda d}\right),
      \end{split}
    \]
    where the last inequality follows from the fact that $\mbox{Tr}(U_T) \leq \mbox{Tr}(U_0) + L^2T = \lambda d + L^2T$, and thus $\det(U_T) \leq (\lambda + L^2T/d)^d$. Therefore, Cauchy-Schwarz inequality implies,
    \[
      \sum_{t=1}^T \lVert U_{t-1}^{-\frac{1}{2}} X_t\rVert_2 \leq \sqrt{T\sum_{t=1}^T \lVert U_{t-1}^{-\frac{1}{2}} X_t\rVert_2^2} \leq \sqrt{2dT\log\left(1 + \frac{L^2T}{\lambda d}\right)}.
    \]
\end{proof}

\begin{myLemma}
    \label{lemma:determinant}
    For any $\v \in \R^d$, we have
    \[
    \det(I+\mathbf{v} \mathbf{v}^{\T}) =  1 + \norm{\mathbf{v}}_2^2.
    \]
\end{myLemma}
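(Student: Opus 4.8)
The plan is to read off the determinant directly from the spectrum of the rank-one perturbation $\v\v^\T$. First I would dispose of the degenerate case $\v=\mathbf{0}$, for which both sides of the identity equal $1$. For $\v\neq\mathbf{0}$, the key observation is that $\v\v^\T$ is a symmetric positive semidefinite matrix of rank one: it satisfies $(\v\v^\T)\v=\norm{\v}_2^2\,\v$, so $\v$ is an eigenvector with eigenvalue $\norm{\v}_2^2$, while every vector orthogonal to $\v$ lies in the kernel of $\v\v^\T$ and contributes the eigenvalue $0$ with multiplicity $d-1$. Thus the full eigenvalue list of $\v\v^\T$ is $\norm{\v}_2^2$ (simple) together with $0$ repeated $d-1$ times.

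Next I would transport this spectrum through the shift by the identity. Since $I+\v\v^\T$ is simultaneously diagonalizable with $\v\v^\T$ (they share all eigenvectors) and each eigenvalue is merely raised by one, the eigenvalues of $I+\v\v^\T$ are $1+\norm{\v}_2^2$ together with $1$ repeated $d-1$ times. The determinant equals the product of the eigenvalues, so
\[
  \det(I+\v\v^\T)=(1+\norm{\v}_2^2)\cdot 1^{\,d-1}=1+\norm{\v}_2^2,
\]
which is exactly the claim.

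There is no serious obstacle here: the only point meriting a line of care is that the kernel of $\v\v^\T$ has dimension precisely $d-1$ (equivalently, that $\norm{\v}_2^2$ is a simple eigenvalue), which is immediate from $\v\neq\mathbf{0}$. If one prefers to avoid spectral reasoning, the same identity follows from the matrix determinant lemma $\det(I+\mathbf{u}\mathbf{w}^\T)=1+\mathbf{w}^\T\mathbf{u}$ specialized to $\mathbf{u}=\mathbf{w}=\v$, or from a Schur-complement computation of the $(d+1)\times(d+1)$ block matrix with diagonal blocks $I$ and $1$ and off-diagonal blocks $\v$ and $-\v^\T$: eliminating the scalar block first yields the factor $\det(I+\v\v^\T)$, while eliminating the $I$ block first yields the factor $1+\norm{\v}_2^2$, and equating the two expansions gives the result.
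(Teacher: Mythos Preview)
Your proof is correct and takes essentially the same spectral approach as the paper: both arguments identify $\v$ as an eigenvector of $I+\v\v^\T$ with eigenvalue $1+\norm{\v}_2^2$ and the orthogonal complement of $\v$ as an eigenspace with eigenvalue $1$, then read off the determinant as the product of eigenvalues. The only cosmetic difference is that you first compute the spectrum of $\v\v^\T$ and then shift by the identity, whereas the paper works directly with $I+\v\v^\T$.
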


\begin{proof}
Notice that 
\begin{enumerate}
\item[(i)] $(I+\mathbf{v} \mathbf{v}^\T)\mathbf{v} = (1 + \norm{\mathbf{v}}_2^2)\mathbf{v}$, therefore, $\mathbf{v}$ is its eigenvector with $(1 + \norm{\mathbf{v}}_2^2)$ as the eigenvalue; 
\item[(ii)] $(I + \mathbf{v} \mathbf{v}^\T)\mathbf{v}^{\perp} =  \mathbf{v}^{\perp}$, therefore, $\mathbf{v}^{\perp} \perp \mathbf{v}$ is its eigenvector with $1$ as the eigenvalue. 
\end{enumerate}
Consequently, $\det( I+\mathbf{v} \mathbf{v}^\T) = 1 + \norm{\mathbf{v}}_2^2$.
\end{proof}

\begin{myLemma}[{Property 5.2.9 of~\citet{meyer2000matrix}}]
\label{lemma:matrix-2-norm}
For a real matrix $A \in \R^{m \times n}$, we have
\[
  \norm{A}_2 = \sup_{\norm{\x}_2 = 1} \sup_{\norm{\y}_2 = 1} \abs{\y^{\T} A \x}.
\] 
\end{myLemma}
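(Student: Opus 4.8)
The plan is to reduce the two-sided variational expression to the ordinary operator-norm definition by evaluating the inner supremum over $\y$ in closed form. Recall that the matrix $2$-norm is the operator norm induced by the Euclidean vector norm, namely $\norm{A}_2 = \sup_{\norm{\x}_2 = 1} \norm{A\x}_2$. The whole argument then rests on the equality case of the Cauchy--Schwarz inequality.

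First I would fix an arbitrary unit vector $\x$ and analyze the inner supremum $\sup_{\norm{\y}_2 = 1} \abs{\y^{\T} A \x}$. Setting $\w = A\x \in \R^m$, this is $\sup_{\norm{\y}_2 = 1} \abs{\y^{\T}\w}$. The Cauchy--Schwarz inequality gives $\abs{\y^{\T}\w} \leq \norm{\y}_2 \norm{\w}_2 = \norm{\w}_2$ for every unit $\y$, which supplies the upper bound. For the matching lower bound (achievability), if $\w \neq \mathbf{0}$ one takes $\y = \w / \norm{\w}_2$, a unit vector for which $\y^{\T}\w = \norm{\w}_2$; if $\w = \mathbf{0}$ the supremum is trivially $0 = \norm{\w}_2$. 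Hence for each fixed unit $\x$ we obtain the identity $\sup_{\norm{\y}_2 = 1} \abs{\y^{\T} A \x} = \norm{A\x}_2$.

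Second, I would substitute this identity into the outer supremum over unit $\x$:
\[
  \sup_{\norm{\x}_2 = 1} \sup_{\norm{\y}_2 = 1} \abs{\y^{\T} A \x} = \sup_{\norm{\x}_2 = 1} \norm{A\x}_2 = \norm{A}_2,
\]
where the final equality is precisely the definition of the operator norm. This is exactly the claimed statement, so the proof is complete.

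There is no genuine obstacle here; the only point requiring a moment of care is the degenerate case $A\x = \mathbf{0}$ in the achievability step, which is disposed of separately as above. The conceptual crux, such as it is, is recognizing that evaluating the inner supremum over $\y$ recovers the Euclidean norm $\norm{A\x}_2$ through the tightness of Cauchy--Schwarz; once this is observed, the result follows immediately from the definition of $\norm{A}_2$.
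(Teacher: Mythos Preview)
Your proof is correct and follows essentially the same approach as the paper: both use Cauchy--Schwarz for the upper bound and attain equality by choosing $\y = A\x/\norm{A\x}_2$. The only cosmetic difference is that you evaluate the inner supremum pointwise in $\x$ before taking the outer supremum, whereas the paper bounds the double supremum directly and then exhibits a single optimizing pair $(\x_*,\y_*)$; you also handle the degenerate case $A\x = \mathbf{0}$ explicitly, which the paper omits.
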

\begin{proof}
The proof is from the solution manual of~\citet{meyer2000matrix}. Applying the Cauchy-Schwarz inequality yields $\abs{\y^\T A \x} \leq \norm{\y}_2 \norm{A\x}_2$, which implies that
\[
  \sup_{\norm{\x}_2 = 1} \sup_{\norm{\y}_2 = 1} \abs{\y^{\T} A \x} \leq \sup_{\norm{\x}_2 = 1} \norm{A \x}_2 = \norm{A}_2.
\]
Now show that equality is actually attained for some pair $\x$ and $\y$ on the unit $2$-sphere. To do so, notice that when setting $\x_*$ is a vector of unit length such that
\[
  \norm{A\x_*}_2 = \sup_{\norm{\x}=1} \norm{A\x}_2 = \norm{A}_2,
\]
and $\y_*$ is the vector such that
\[
  \y_* = \frac{A \x_*}{\norm{A \x_*}_2} = \frac{A \x_*}{\norm{A}_2},
\]
then
\[
  \y_*^\T A \x_* = \frac{\x_*^\T A^\T A \x_*}{\norm{A}_2} = \frac{\norm{A \x_*}_2^2}{\norm{A}_2} = \frac{\norm{A}_2^2}{\norm{A}_2} = \norm{A}_2.
\]
Hence we complete the proof.
\end{proof}

\begin{myLemma}
  \label{lemma:bob}
  Let $N= \ceil{T/\Delta}$. Denote by $L_i$ the absolute value of cumulative rewards for episode $i$, i.e., $L_i \triangleq \sum_{t = (i-1)\Delta + 1}^{i \Delta} r_t(X_t)$, then 
  \begin{equation}
  \label{eq:concentration}
    \Pr\left[\forall i\in [N], L_i\leq LS\Delta+2R\sqrt{\Delta\ln\frac{T}{\sqrt{\Delta}}}\right] \geq 1-\frac{2}{T}.
  \end{equation}
\end{myLemma}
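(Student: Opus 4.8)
The plan is to separate each episode's cumulative reward $L_i$ into its predictable (expected-reward) component and a noise component, control each separately, and then close with a union bound over the $N=\ceil{T/\Delta}$ episodes. First I would use the model~\eqref{eq:model-assume} to write
\[
  L_i = \sum_{t=(i-1)\Delta+1}^{i\Delta} X_t^\T \theta_t + \sum_{t=(i-1)\Delta+1}^{i\Delta} \eta_t.
\]
The first sum has bounded magnitude: by Cauchy--Schwarz together with the boundedness assumptions $\norm{X_t}_2 \leq L$ and $\norm{\theta_t}_2 \leq S$ from Section~\ref{sec:setting-LB}, each term obeys $\abs{X_t^\T \theta_t} \leq LS$, so the whole sum is at most $LS\Delta$ in absolute value. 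This already produces the leading term $LS\Delta$ in the claimed bound, and it remains only to control the noise sum $Z_i \triangleq \sum_t \eta_t$.

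For the noise term, the key point is that the $\eta_t$ are only \emph{conditionally} sub-Gaussian, so I would exploit the martingale structure rather than assume independence. Conditioning on the filtration and peeling off the last increment one at a time, the $R$-sub-Gaussian assumption $\E[\exp(\lambda \eta_t)\mid F_{t-1}] \leq \exp(\lambda^2 R^2/2)$ multiplies across the $\Delta$ steps to give $\E[\exp(\lambda Z_i)] \leq \exp(\Delta \lambda^2 R^2/2)$; that is, $Z_i$ is itself $R\sqrt{\Delta}$-sub-Gaussian. A standard Chernoff argument then yields the two-sided tail $\Pr[\abs{Z_i} \geq u] \leq 2\exp(-u^2/(2R^2\Delta))$. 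Choosing $u = 2R\sqrt{\Delta \ln(T/\sqrt{\Delta})}$ makes the exponent equal $-2\ln(T/\sqrt{\Delta})$, so the per-episode failure probability is at most $2\Delta/T^2$.

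Finally I would union-bound over all $N = \ceil{T/\Delta}$ episodes: the probability that some $Z_i$ exceeds $u$ is at most $N \cdot 2\Delta/T^2$, which collapses to at most $2/T$ since $\ceil{T/\Delta}\cdot\Delta$ is of order $T$. On the complementary event every episode obeys $L_i \leq LS\Delta + \abs{Z_i} \leq LS\Delta + 2R\sqrt{\Delta\ln(T/\sqrt{\Delta})}$ simultaneously, which is precisely~\eqref{eq:concentration}. I expect the only delicate points to be (i) handling the conditional (martingale) sub-Gaussianity correctly instead of treating the $\eta_t$ as i.i.d., and (ii) the bookkeeping in the union bound, where the ceiling $\ceil{T/\Delta}$ slightly exceeds $T/\Delta$, so the constant and the logarithmic argument in $u$ must be calibrated so that the aggregate failure probability still cleanly reduces to $2/T$.
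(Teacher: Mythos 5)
Your proposal is correct and follows essentially the same route as the paper's proof: split $L_i$ into the expected-reward part (bounded by $LS\Delta$ via $\abs{\inner{X_t}{\theta_t}} \leq LS$) and the noise sum, apply a sub-Gaussian tail bound to get a per-episode failure probability of $2\Delta/T^2$, and union-bound over the $\lceil T/\Delta \rceil$ episodes. The only difference is that you derive the $R\sqrt{\Delta}$-sub-Gaussianity of the noise sum from the conditional (martingale) assumption via the tower property, whereas the paper cites a standard concentration result for sub-Gaussian variables; your justification is, if anything, the more careful one given that the $\eta_t$ are only conditionally sub-Gaussian rather than independent.
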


\begin{proof}
  For any episode $i$, the absolute sum of rewards can be written as
\begin{align*}
  \left|\sum_{t=(i-1)\Delta+1}^{i\Delta}\langle X_t,\theta_t\rangle + \eta_t\right| \leq {}& \sum_{t=(i-1)\Delta+1}^{i\Delta} \left|\langle X_t,\theta_t\rangle\right|+\left|\sum_{t=(i-1)\Delta+1}^{i\Delta}\eta_t\right|\\
  \leq {}& \Delta LS+\left|\sum_{t=(i-1)\Delta+1}^{i\Delta}\eta_t\right|,
\end{align*}
where we have iteratively applied the triangle inequality as well as the fact that $\left|\langle X_t,\theta_t\rangle\right|\leq LS$ for all $t$. 

Further applying the standard concentration result of  $R$-sub-Gaussian random variables~\citep[Corollary 1.7]{2019:HighDimension-book}, we get
Now by property of the $R$-sub-Gaussian, it holds that
\[
  \Pr\left[\left|\frac{1}{\Delta}\sum_{t=(i-1)\Delta+1}^{i\Delta}\eta_t\right| \geq \epsilon\right] \leq 2\exp\left( -\frac{\Delta \epsilon^2}{2R^2} \right),
\]
which further implies that
\[
  \Pr\left[\left|\sum_{t=(i-1)\Delta+1}^{i\Delta}\eta_t\right|\geq 2R\sqrt{\Delta\ln\frac{T}{\sqrt{\Delta}}}\right]\leq\frac{2\Delta}{T^2}.
\]
So we can ensure a low failing probability, specifically, the probability of the event that the absolute value of the noise term $\eta_t$ exceeds $2R\sqrt{\ln T}$ for a fixed $t$ is at most $1/T^2$. By union bound, we have
\begin{align*}
  & \Pr\left[\exists i\in [N]:\left|\sum_{t=(i-1)\Delta+1}^{i\Delta}\eta_t\right|\geq 2R\sqrt{\Delta\ln\frac{T}{\sqrt{\Delta}}}\right]\\
  \leq {} & \sum_{i=1}^{\lceil T/\Delta\rceil}\Pr\left[\left|\sum_{t=(i-1)\Delta+1}^{i\Delta}\eta_t\right|\geq 2R\sqrt{\Delta\ln\frac{T}{\sqrt{\Delta}}}\right]\leq\frac{2}{T}.
\end{align*}
Hence, we finish the proof.
\end{proof}
\end{document}